\documentclass[twocolumn]{article}

\usepackage{microtype}
\usepackage{graphicx}
\usepackage{subcaption}
\usepackage{booktabs} 
\usepackage{algorithm}
\usepackage{algorithmic}
\usepackage{hyperref}

\usepackage{PRIMEarxiv}
\usepackage[numbers]{natbib}
\usepackage{changepage}

\usepackage{amsmath}
\usepackage{listings}
\lstdefinelanguage{json}{
  morestring=[b]",
  morestring=[s]{'}{'},
  morecomment=[l]{//},
  morecomment=[s]{/*}{*/},
  morekeywords={true,false,null},
  sensitive=false,
  alsoletter={0123456789.,:;!_},
}
\usepackage{amssymb}
\usepackage{mathtools}
\usepackage{amsthm}
\usepackage{dsfont}

\usepackage{xcolor}
\definecolor{red}{RGB}{255,0,0}
\definecolor{blue}{RGB}{0,0,255}

\usepackage[capitalize,noabbrev]{cleveref}

\theoremstyle{plain}
\newtheorem{theorem}{Theorem}[section]

\theoremstyle{definition}

\theoremstyle{remark}

\usepackage[most]{tcolorbox}
\tcbset{
  colback=gray!5, colframe=gray!80!black,
  boxrule=0.5pt, arc=2mm, outer arc=2mm,
  left=4pt, right=4pt, top=4pt, bottom=4pt,
  fonttitle=\bfseries, coltitle=white,
  before skip=6pt,
  after skip=6pt,
}
\tcbset{
  thetabox/.style={
    colback=white,       
    colframe=black!90,      
    coltitle=white,      
    boxrule=0.45pt,
    left=5pt,right=5pt,top=5pt,bottom=5pt,
    boxsep=1pt,
    before skip=1pt,
    after skip=1pt
  }
}

\usepackage{float}
\begin{document}

\title{When Should LLMs Be Less Specific?\\
Selective Abstraction for Reliable Long-Form Text Generation}




\makeatletter
\renewcommand{\thefootnote}{\fnsymbol{footnote}}
\makeatother

\author{
  Shani Goren$^{1}$\footnotemark[1]\quad
  Ido Galil$^{2}$\footnotemark[1]\quad
  Ran El-Yaniv$^{1,2}$\\
  \vspace{0.2em}
  $^{1}$Technion \quad $^{2}$NVIDIA \\
  \texttt{shanigoren@cs.technion.ac.il}
}

\twocolumn[
\maketitle
\vspace{-1em}]

\footnotetext[1]{Equal contribution.}

\setcounter{footnote}{0}
\renewcommand{\thefootnote}{\arabic{footnote}}

\begin{abstract}
\begin{adjustwidth}{-0.6cm}{-0.6cm}
LLMs are widely used, yet they remain prone to factual errors that erode user trust and limit adoption in high-risk settings. One approach to mitigate this risk is to equip models with uncertainty estimation mechanisms that abstain when confidence is low. However, this binary ``all-or-nothing'' approach is excessively restrictive in long-form settings, often discarding valuable information. We introduce \textbf{\textit{Selective Abstraction (SA)}}, a framework that enables LLMs to trade specificity for reliability by selectively reducing the detail of uncertain content. We first formalize SA through the lenses of selective risk and coverage. We then propose \textbf{\textit{Atom-wise Selective Abstraction}}, a claim-level instantiation that decomposes responses into atomic claims (short, self-contained statements each expressing a single fact) and replaces uncertain atoms with higher confidence, less specific abstractions. To evaluate this framework, we develop a novel end-to-end pipeline for open-ended generation that instantiates risk as factual correctness and measures coverage using an information-theoretic measure of retained information. Across six open-source models on the FactScore and LongFact-Objects benchmarks, atom-wise SA consistently outperforms existing baselines, improving the area under the risk-coverage curve (AURC) by up to 27.73\% over claim removal, demonstrating that reducing specificity can boost accuracy and reliability while preserving most of their original meaning.
\end{adjustwidth}
\end{abstract}
\vspace{-6pt}


\section{Introduction}
\vspace{-0.5em}

\begin{figure}[!h]
    \centering
    \begin{minipage}[t]{0.4\linewidth}
        \vspace{0pt}
        \centering
        \includegraphics[height=3.8cm]{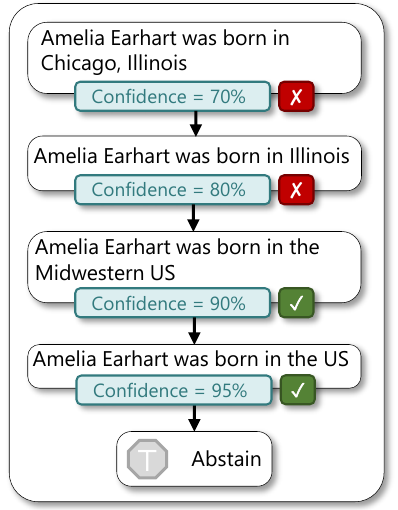}
    \end{minipage}\hfill
    \begin{minipage}[t]{0.6\linewidth}
        \vspace{0pt}
        \centering
        \includegraphics[height=4cm]{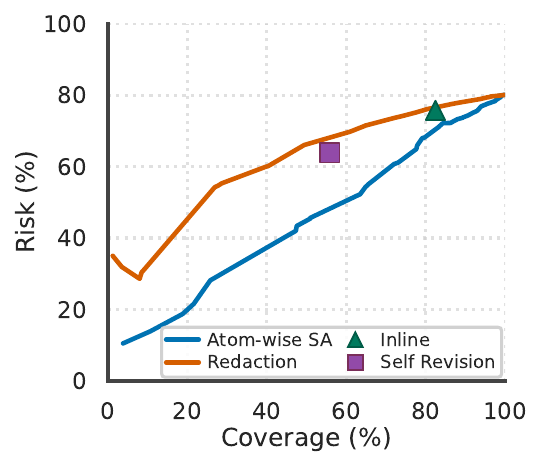}
    \end{minipage}
    \caption{\textbf{Left:} Example abstraction sequence for atom-wise selective abstraction (SA). Increasing the confidence threshold replaces low-confidence atoms with less specific, more reliable abstractions. \textbf{Right:} Example risk-coverage curve comparing atom-wise SA to baselines. Model: gpt-oss-120b, dataset: FactScore.}
    \label{fig:example_rc_curve}
    \vspace{-10pt}
\end{figure}

Large language models (LLMs) have demonstrated incredible performance across a broad spectrum of text generation tasks \cite{anthropic2024claude, comanici2025gemini25pushingfrontier, deepseekai2025deepseekv3technicalreport}. Despite these advances, LLMs remain prone to factual errors, generating content that appears fluent and coherent, but is factually incorrect or unsupported by underlying knowledge \cite{DBLP:journals/csur/JiLFYSXIBMF23, 
DBLP:journals/corr/abs-2401-01313, hallucinationsurvey,  
kalai2025languagemodelshallucinate}. 
As LLMs are increasingly deployed in high-stakes domains such as healthcare and law, factual mistakes can erode user trust and limit adoption. Ensuring trustworthy generation is therefore a central challenge for their reliable real-world use.\\
A widely studied approach to this problem is uncertainty estimation, which aims to quantify a model’s confidence in its outputs. In classification, uncertainty estimates support selective prediction \cite{El-Yaniv+Weiner}, where the model predicts only when confidence is sufficiently high and abstains otherwise. In text generation, a similar idea appears as \textit{selective generation} \cite{selectivegen}, which allows the model to abstain from answering when confidence is low. This approach is effective for short outputs such as question answering, where it is reasonable to treat the response as a single decision.
However, selective generation does not scale well to long-form generation. Long responses often contain many factual claims with varying levels of uncertainty, sometimes within a single sentence. For example, the sentence ``Amelia Earhart was an American aviator born on July 24, 1897'' bundles multiple claims about profession, nationality, and date of birth. An LLM may be uncertain about the exact date while remaining confident about the month, the year, or any of the other details. In a long-form setting (e.g.: generating an entire biography, which may consist of dozens of sentences), abstaining from the entire generation is excessively restrictive, as it results in no content being generated at all. Even removing an entire sentence can discard valuable content. This highlights the need for a more fine-grained approach.\\
In long-form generation, a common failure mode is a single overconfident detail within an otherwise reasonable answer. In February 2023, Alphabet lost roughly \$100 billion in market value in the wake of Google’s Bard debut, as a high-profile demo response drew scrutiny for a factual mistake\footnote{\href{https://www.reuters.com/technology/google-ai-chatbot-bard-offers-inaccurate-information-company-ad-2023-02-08/}
{Reuters, ``Alphabet shares dive after Google AI chatbot Bard flubs answer in ad'', Feb. 8, 2023.}}. Asked about James Webb Space Telescope discoveries, Bard confidently claimed JWST took the “very first pictures” of an exoplanet, an assertion off by nearly two decades. Had Bard recognized uncertainty about whether JWST was ``first'' and instead generated ``JWST has taken pictures of exoplanets'' or ``JWST has contributed important exoplanet discoveries'', the response would have remained informative while avoiding the hallucination. This incident illustrates a critical insight: not all information carries equal risk. Being overly specific in the wrong places can have dramatic consequences, while targeted abstraction, here removing just two words, maintains usefulness without sacrificing reliability.

We introduce \emph{\textbf{Selective Abstraction}}, a framework that enables LLMs to reduce the amount of information in long-form generations when faced with uncertainty. Rather than relying on binary all-or-nothing decisions, Selective Abstraction retains informative content by reducing the specificity of uncertain parts, yielding outputs that are less specific but more reliable.

\textbf{The key contributions of this paper are as follows:}\\
1. We introduce \textbf{Selective Abstraction}, a general framework for trading specificity for reliability in long-form text generation, formalized through selective risk and coverage.\\
2. We propose \textbf{atom-wise selective abstraction}, a claim-level instantiation that selectively replaces uncertain claims with higher-confidence abstractions rather than removing them entirely.\\
3. We develop an \textbf{end-to-end evaluation pipeline} for selective abstraction in open-ended factual generation, instantiating risk via factual correctness and coverage via an information-theoretic measure of retained content.\\
4. We empirically demonstrate, across six open-source LLMs and two long-form factual benchmarks, that \textbf{atom-wise selective abstraction consistently improves the risk-coverage trade-off}, outperforming multiple baselines.\\
5. We propose a \textbf{risk-guided threshold selection algorithm} that selects a \emph{confidence threshold} to meet a user-defined target risk with high probability, and evaluate it empirically.

\section{Related Work}

\begin{figure*}[t]
    \centering
    \includegraphics[width=1\linewidth]{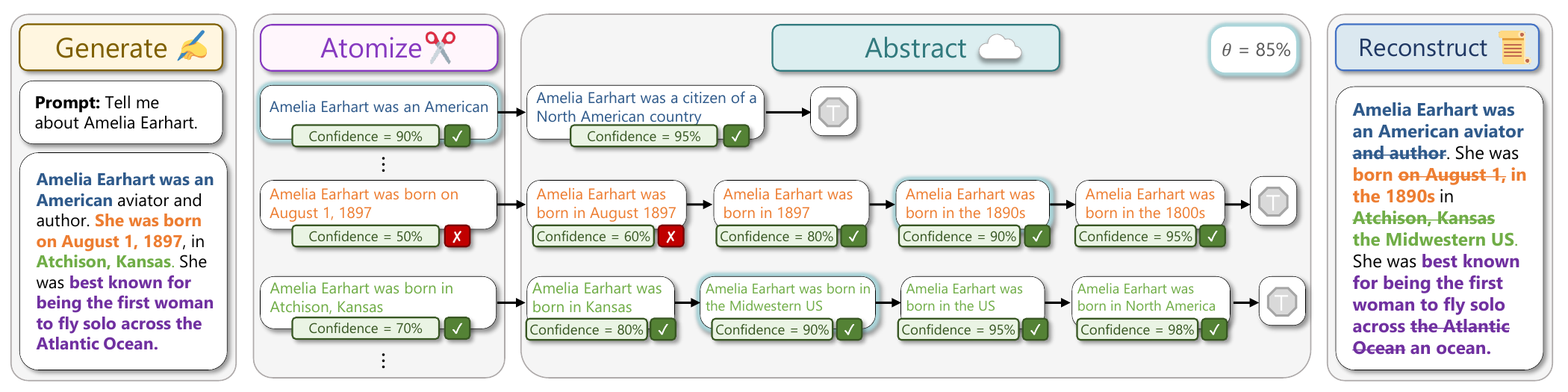}
    \caption{
    An overview of the Selective Abstraction framework. The generated text is decomposed into atoms, and low-confidence atoms are replaced with confident abstractions. The confidence threshold in this example is 85$\%$.
    }
\label{fig:abstraction_pipeline}
\end{figure*}

\textbf{Selective Prediction.}
We view selective prediction as a starting point for formalizing Selective Abstraction. In classic selective prediction \cite{El-Yaniv+Weiner,geifman2019biasreduceduncertaintyestimationdeep}, a classifier may abstain when uncertain. Formally, a selective model is a pair $(f,g)$, where $f$ is a classifier and $g$ is a selection function deciding whether to predict or abstain. A common choice defines $g$ via a confidence score $\kappa$ and threshold $\theta$:
$g_\theta(x|\kappa,f)=\mathds{1}[\kappa(x,\hat{y}|f)>\theta]$, where $\hat{y}=f(x)$.
Thus, the model predicts only when its confidence exceeds $\theta$.\
Selective prediction induces a trade-off between \emph{risk} and \emph{coverage}. Risk is defined with respect to a given loss, e.g., the 0/1 loss, and is measured only over the samples that are predicted. Coverage is the fraction of samples that are not rejected. By rejecting uncertain samples, the selective model aims to reduce risk at the cost of lower coverage. 
\emph{Hierarchical selective classification} \cite{hsc} extends selective classification to a hierarchical setting. Under uncertainty, the model can retreat to a less specific ancestor label rather than abstaining (e.g.: predicting ``Dog'' instead of ``Golden Retriever''), thereby using the hierarchy to guide a reduction in the information content of predictions. This is conceptually aligned with our objective of trading specificity for reliability, rather than relying on a binary abstention decision.

\textbf{Selective Generation.} 
Most existing work on uncertainty estimation for text generation focuses on tasks where the generated text is at most a few tokens long, such as machine translation, text summarization, and question answering (QA)~\cite{lmpolygraph, semantickuhn, DBLP:conf/nips/0001YPWWY0T24}. Among these, QA is the most relevant to our setting. However, even in open-ended QA benchmarks, the generated answers are typically short, e.g. TriviaQA~\cite{triviaqa} and CoQA~\cite{coqa}. This allows for a binary decision: either accept or reject the model’s output. This approach, known as \emph{selective generation}~\cite{selectivegen}, serves as the NLP analogue of selective classification~\cite{lmpolygraph, DBLP:conf/acl/FengS00BT24, DBLP:journals/tmlr/LinT024, DBLP:journals/tacl/VashurinFVRVTPXSGPBNPS25, DBLP:journals/corr/abs-2502-14427}. While these tasks are considerably easier to evaluate, the uncertainty estimation methods developed for them do not generalize well to long-form text generation \cite{DBLP:conf/emnlp/Zhang0BC24}, which makes for a much more realistic use case for LLMs.

\textbf{Evaluation of Long-form Generation.}
In our work, quantifying \emph{risk} (Section~\ref{subsec:risk}) requires evaluating the reliability of the generated response. Throughout this paper, we use factual correctness as our notion of reliability. Most existing work in this research thread is focused on mitigating hallucinations in long-form text via fact-checking pipelines. These methods typically decompose model outputs into smaller units and verify them using external tools, such as retrieval systems or auxiliary LLMs.
Earlier frameworks such as FactScore laid important groundwork but are constrained by their design: FactScore \cite{FactScore} relies on static retrieval and entity-level assumptions, while SAFE \cite{DBLP:conf/nips/WeiYSLH0TPLHDL24} depends on the costly Google Search API and follows a rigid Google-Search loop with a predetermined number of queries per claim. 
In contrast, our fact-checking agent (Section ~\ref{subsec:risk}) supports unconstrained Wikipedia search, and relies only on free publicly available resources, enabling large-scale evaluation.\\
An agentic approach to factuality evaluation has recently been adopted by OpenAI\footnote{https://cdn.openai.com/gpt-5-system-card.pdf}, which verifies claims using commercial models accessed via a paid API.
To reduce cost, they verify multiple claims within a single LLM call, trading off per-claim focus, since evidence and reasoning must be shared across claims and can degrade performance \cite{ji2025batchprompting}. In contrast, our agent verifies each abstraction independently, enabling targeted evidence gathering and focused per-claim reasoning.

\textbf{Uncertainty-Aware Fact-Checking.}
Other works explicitly incorporate uncertainty estimation into fact-checking pipelines. \citet{DBLP:journals/corr/abs-2307-03987} detects LLM hallucinations by extracting key parts of the output using an external model, and verify the uncertain parts against an external knowledge source. \citet{DBLP:conf/acl/PanWLLWKN23} proposed to fact-check complex statements by decomposing them into simpler subtasks and generating reasoning programs to verify these statements. ~\citet{DBLP:conf/acl/FadeevaRSPLMTKP24} identify uncertain tokens in long-form outputs, though they stop short of editing the generation. 

\textbf{Granularity-driven Generation and Evaluation.} 
Recent work has begun to explore the relationship between granularity, uncertainty, and factuality in language model outputs, though mostly in narrow contexts.
\citet{yona2024narrowingknowledgeevaluationgap} observe that factual questions can be answered correctly at multiple levels of granularity, yet standard QA evaluation considers only a single reference level, penalizing less specific yet still correct responses. They find that models tend to be overly specific and propose reducing granularity by aggregating multiple sampled answers into a higher-level response. While conceptually aligned with our view that reducing information can mitigate factual errors, their approach is limited to QA and relies on sampling, making it impractical for long, open-ended generation.\\
They further introduce a relative informativeness metric defined only within alternative granularities of the same answer. In contrast, our information measure is computed independently of other claims, making it interpretable and applicable to open-ended generation, and consequently allowing comparisons across statements of varying granularity that are not related by abstraction.\\
Other works treat specificity as a one-sided metric, emphasizing maximal detail while overlooking its trade-off with correctness under uncertainty. 
In conversational systems, specificity is often regarded as inherently desirable for making interactions more engaging, and annotators are guided to rate responses such that specific ones are preferred \cite{adiwardana2020humanlikeopendomainchatbot, thoppilan2022lamdalanguagemodelsdialog}. This narrow view also appears in factual QA settings \cite{ kahl-etal-2024-llms, sui2025knowledgegraphsmakelarge}, where both human and LLM-as-a-judge annotators are instructed to favor more specific outputs. \citet{huang2023languagemodelsspecifichow} evaluate specificity via masked-token prediction, but their formulation is restricted to narrow outputs and does not capture variable granularity in longer text.





\begingroup
\raggedbottom
\section{Selective Abstraction}\label{sec:selective_abstraction}
\emph{\textbf{Selective abstraction (SA)}} reduces the specificity of uncertain information in generated text, with the goal of improving its reliability. 
In this section, we formalize the selective abstraction framework at a conceptual level.
Let $\mathcal{M}$ be a language model. Given a prompt $p$, denote the language model's generated text response\footnote{Formally, $\mathcal{M}(p)$ induces a distribution over responses and $T \sim \mathcal{M}(p)$. Throughout this paper, we write quantities that depend on $T$ (e.g., risk, coverage and information) as functions of a realized response, but they should be interpreted as expectation over the generation randomness.}: $\mathcal{M}(p) = T$.
We define selective abstraction as:
$ SA(p,\mathcal{M}) = T'$, 
a process that uses $\mathcal{M}$ to output $T'$, a response to $p$ such that, ideally $\text{Accuracy}(T') > \text{Accuracy}(T)$, at the possible cost of information loss, $I(T') \le I(T)$ where $I(\cdot)$ denotes an information measure over texts (see Section~\ref{subsec:coverage} for definition).
We keep this definition broad to encompass any mechanism that uses uncertainty to reduce the amount of information expressed in the model's response.\\
Selective abstraction can be performed using an explicit confidence scoring function, denoted $\kappa(t|\mathcal{M})$, that numerically quantifies the model's confidence assigned to a text $t$ (where 
$t$ is any text string, including but not limited to the generated response). For a detailed discussion of the role of the confidence function, see Appendix~\ref{app:kappa}. A threshold $\theta$ may also be specified to control the degree of information reduction: when $(\kappa,\theta)$ are specified, the resulting response $T'$ is required to satisfy $\kappa(T') \ge \theta$. We denote this by
$SA(p,\mathcal{M},\kappa,\theta)=T'$.\\
We evaluate the abstractive behavior of SA using \emph{risk} and \emph{coverage}.
Risk can be quantified via any user-chosen loss function that reflects reliability, for instance, the hallucination rate.
Coverage is defined as the fraction of retained information,
$
\phi(T',T) = \frac{I(T')}{I(T)}.
$
Varying the threshold $\theta$ induces a risk-coverage trade-off. Higher thresholds will reduce coverage while decreasing risk. In Section~\ref{sec:evaluation}, we formalize the concrete risk and coverage definitions used in our experiments.

\begin{figure*}[t]
    \centering
    \includegraphics[width=1\linewidth]{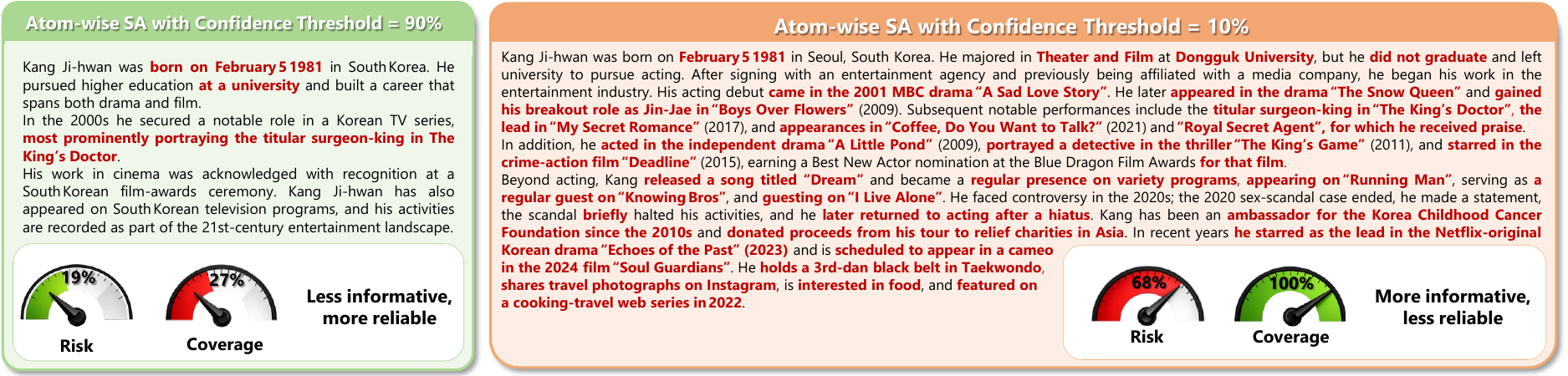}
    \caption{
    Qualitative example of atom-wise Selective Abstraction at two confidence thresholds: a higher threshold reduces detail but improves reliability, while a lower threshold retains detail at higher risk. \textcolor{red}{Red text} signifies incorrect claims. Generating model: gpt-oss-120b, dataset: FactScore. See Appendix~\ref{app:sa_examples} for additional examples.
    }
\label{fig:qualitative_example}
\end{figure*}
\endgroup

\subsection{Atom-wise Selective Abstraction}\label{sec:atom_wise_sa}
Long-form responses contain many distinct pieces of information, so selective abstraction must operate at a finer granularity. 
We therefore propose a ``claim-level'' instantiation of SA, as a procedure that enables control of information reduction.
Concretely, we selectively replace low-confidence claims with less specific, higher-confidence abstractions. All prompts used throughout this section are provided in Appendix \ref{app:sa_prompts}.

\textbf{Stage 1: Generation.}
Given a prompt $p$, we generate an initial response $T=\mathcal{M}(p)$. 

\textbf{Stage 2: Atomization.}
We follow prior work \cite{DBLP:conf/naacl/NenkovaP04, shapira-etal-2019-crowdsourcing, zhang-bansal-2021-finding, liu2023revisitinggoldstandardgrounding, FactScore} and decompose the text into \emph{atomic claims}\footnote{Throughout this paper, we use \emph{atomic claims} and \emph{atoms} interchangeably.}, which are short, standalone statements, each expressing a single piece of information.
Let $\mathcal{A}_{T} = (a_i)_{i=1}^{n}$ denote the set of atoms extracted from $T$. The decomposition is performed by prompting the generating LLM with few-shot examples.

\textbf{Stage 3: Abstraction.}
This is the core stage of our framework. It operates over the atoms produced in Stage 2 and consists of three steps: (1) atom confidence elicitation, (2) abstraction generation, and (3) selection.

\textbf{(1) Atom Confidence Elicitation.} 
We obtain the model's confidence score for each atom. Let  $\kappa_{\text{atom}}(a_i\mid p,T,\mathcal{M})$ denote a confidence scoring function that quantifies the confidence of the model $\mathcal{M}$ in $a_i$, conditioned on the prompt $p$ and the original response $T$.
The primary confidence function we use throughout the paper is \emph{verbalized}, meaning that the LLM is explicitly prompted to report its confidence verbally. However, our method can be used with any confidence function. For discussion on alternative confidence functions, see Appendix~\ref{app:kappa}. For each response, the model is provided with the full list of atoms in a single prompt and asked to assign a confidence score to each atom. 

\textbf{(2) Abstraction Generation and Confidence Elicitation.} 
Let $\theta$ denote a confidence threshold.
For each atom with $\kappa_{\text{atom}}(a_i\mid p,T,\mathcal{M})<\theta$, we prompt the generating model to produce an abstraction sequence:
\vspace{-0.5em}
$$
Ab(a_i)=\bigl(a_i^{(0)},a_i^{(1)},\ldots,a_i^{(j)},a_i^{(j+1)}=\top\bigr),
$$
\vspace{-1.5em}

where $a_i^{(0)}=a_i$, each $a_i^{(j+1)}$ is less specific than $a_i^{(j)}$, and $\top$ corresponds to full abstention (a zero-information abstraction).
We then obtain an abstraction-level confidence score $\kappa_{\text{abs}}(a_i^{(j)}\mid p,T,\mathcal{M})$ for each abstraction in the sequence by prompting the model to score the list of abstractions, conditioned on the atom confidence $\kappa_{\text{atom}}(a_i\mid p,T,\mathcal{M})$ (We discuss the motivation for this choice in Appendix~\ref{app:design_choices}).

\textbf{(3) Selection.}  
After obtaining confidence scores for all abstractions, we perform a selection step that determines the appropriate level of abstraction for each atom. The goal of this step is to reduce information only as needed, ensuring that the confidence of each selected abstraction exceeds the threshold $\theta$.
Formally, for each atom $a_i$, select the most specific abstraction $a_i^{(j^*)}$ that satisfies the threshold:
\vspace{-0.5em}
$$
a_i^{(j^*)} \quad \text{s.t.} \quad j^* = \arg\min_{j} \left\{ \kappa_{\text{abs}} \left(a_i^{(j)} \mid \mathcal{M}, p, T  \right) > \theta \right\}.
$$
\vspace{-1.5em}

If no such $j$ exists, abstain: $a_i^{(j^*)} = \top$.
This step yields a single selected abstraction (or abstention) per atom.

\textbf{Stage 4: Reconstruction.} 
In the final step, we prompt the LLM to reconstruct the selected abstractions into a coherent text. $T'$, yielding a revised version of $T$ that preserves high-confidence content while abstracting uncertain details. 


\section{Evaluating Selective Abstraction}\label{sec:evaluation}
Evaluating selective abstraction requires quantifying both risk and information, which is inherently difficult in open-ended generation. In such settings, a ground truth may not exist, and even when it does, there is often no clear way to quantify how much information remains. Although selective abstraction can, in principle, be applied to any kind of text generation, defining meaningful measures of risk, and especially information, is not always possible. 
For this reason, we evaluate selective abstraction on factual benchmarks, which provide a controlled setting in which both risk and information allow well-defined measurements. This choice reflects an evaluation constraint rather than a limitation of the proposed method. Our goal is to establish an experimental foundation for studying selective abstraction. Appendix~\ref{app:non_factual_examples} provides additional qualitative examples on non-factual generation.
In this section, we specify the concrete instantiations of \emph{risk} and \emph{information} used for the factual evaluations in this paper.

\subsection{Risk}\label{subsec:risk}

Let $A(T) = (a_i)_{i=1}^{n}$ be the set of atomic claims extracted from a response $T$.
For each atom $a_i$, we assign a correctness label:
\[
\mathrm{correct}(a_i)=
\begin{cases}
1 & \text{$a_i$ is supported by evidence,} \\
0 & \text{otherwise.}
\end{cases}
\]
We then define risk on $T$ as:\\
$$
R(T) = 1 - \frac{1}{n}\sum_{i=1}^{n} \mathrm{correct}(a_i).
$$
To annotate  $\mathrm{correct}(a_j)$, we adopt an agentic fact-checking approach that verifies abstractions against Wikipedia.
The agent searches for evidence supporting each abstraction and labels it as \emph{supported} when such evidence is found, or as \emph{unsupported} when the abstraction is explicitly refuted or when no supporting evidence can be identified (prompts are provided in Appendix~\ref{app:correctness_agent}).
Our approach relies only on publicly available resources, using the free Wikipedia API\footnote{\href{https://www.mediawiki.org/wiki/API:Action_API}{https://www.mediawiki.org/wiki/API:Action\_API}} together with the open-source model gpt-oss-120b \cite{openai2025gptoss120bgptoss20bmodel} as the agent’s backbone. This enables dynamic, reasoning-based annotation while supporting large-scale evaluation. 
To validate our risk evaluation pipeline, we manually evaluated 102 sampled claims\footnote{This validation covers claims uniformly sampled across models, confidence levels, and factual correctness labels. The agent labeled 50 as correct and 52 as incorrect.} against human judgments, yielding an overall F1 score of 0.93.

\subsection{Coverage and Information}\label{subsec:coverage}
Let $A(T) = (a_i)_{i=1}^{n}$ and 
$A(T') = (a'_j)_{j=1}^{m}$ denote the sets of atomic claims extracted from an initial response $T$ and an abstracted response $T'$, respectively. We define the information content of a text as the sum of the information of its distinct atomic claims (after deduplication). Accordingly,
$$
\phi(T',T) = \frac{I(T')}{I(T)} = \frac{\sum_{j=1}^{m}I(a'_j)}{\sum_{i=1}^{n}I(a_i)}.
$$
We quantify the information encoded in an atomic claim by estimating how strongly it narrows down the set of compatible entities, that is, how much new information it contributes about an entity. 
For example, consider the claim \emph{``Alan Turing was born in London''}. We view it as specifying a subject-independent condition, \emph{born in London}, and thus as defining a subset of entities that satisfy this condition. Relative to a broad reference set, \emph{People}, the claim restricts the compatible entities from all people to the smaller subset of people born in London.
Practically, this corresponds to asking how many people were born in London out of all people, and using that ratio as a proxy for how informative the claim is.\\
Formally, each claim $c$ has the form $[\mathrm{entity}] [\mathrm{predicate}]$. We measure information by interpreting the $\mathrm{predicate}$ span as a subject-independent condition over entities of the appropriate type.
This induces: (i) a domain of discourse $\mathcal{E}$, the set of all entities of the same type as $\mathrm{entity}$ (e.g., if $\mathrm{entity}$ is a person, then $\mathcal{E} = \{\text{all people}\}$), and (ii) a predicate $P_c : E \to \{0, 1\}$ derived by removing the specific entity from $c$. 
We then define:
$$
\mathcal{E}_c \;=\; \{\, e \in \mathcal{E} \;:\; P_c(e)=1\,\}, \mathcal{E}_c \subseteq \mathcal{E}.
$$
In our example, for $c=$ ``Alan Turing was born in London'', $\mathcal{E}$ is the set of all people, and $\mathcal{E}_c$ is the subset of people born in London.\\
We adopt an information-theoretic perspective and measure the uncertainty associated with each set by computing its entropy. Assuming a uniform prior\footnote{This is a simplifying assumption with known limitations, since real-world entity frequencies are not uniform, but it enables estimating entropy from set size alone without introducing extra assumptions about how likely different entities are.} within each set, the entropy of a set $\mathcal{E}$ is: 
$$H(\mathcal{E})=-\sum_{e\in \mathcal{E}} \frac{1}{|\mathcal{E}|} \log {\frac{1}{|\mathcal{E}|}} = \log ({{|\mathcal{E}|}}).$$
We then define the information encoded in $c$ as the relative reduction in entropy:
$$
I(c) = 1-\frac{H(\mathcal{E}_c)}{H(\mathcal{E})} = 1-\frac{\log ({{|\mathcal{E}_c|}})}{\log ({{|\mathcal{E}|}})}.
$$

$I(c)$ increases as $c$ narrows the set of compatible entities. By construction, trivial claims ($\mathcal{E}_c=\mathcal{E}$, for instance: ``X is a person'') carry zero information. \\
Our notion of information aligns with prior work that views information as uncertainty reduction~\cite{Bar-Hillel1953-BARSI,   carnap1964outline, DBLP:journals/corr/Resnik95a}.
This perspective also offers a useful intuition: our measure resembles information gain in decision-tree splits.
Each atomic claim narrows the set of compatible entities, and $I(c)$ quantifies the resulting uncertainty reduction. We emphasize that this analogy is given for intuition only: unlike decision trees on a fixed dataset, our splits are defined over an open-world entity set.\\
To estimate $|\mathcal{E}_c|$ and $|\mathcal{E}|$ automatically, we use  Wikidata \cite{wikidata}, one of the largest publicly available knowledge bases, which stores structured information about real-world entities and can be queried using SPARQL, a standard query language for structured knowledge bases.
We use the SPINACH agent \cite{spinach}, which translates natural-language questions into SPARQL queries and executes them against the Wikidata Query Service to estimate the sizes of the corresponding entity sets. Since obtaining verified real-world counts is infeasible in an open-world setting, we use Wikidata as a best-available proxy for these set sizes.\\
Continuing the example, we query Wikidata for the total number of people ($12{,}352{,}844$) and for the number of people born in London ($20{,}313$). Using these values,
$
I(c)=1-\frac{\log(20{,}313)}{\log(12{,}352{,}844)} \approx 0.39.
$
For comparison, \emph{``born in England''} corresponds to $135{,}340$ entities, giving $I\approx 0.28$, consistent with \emph{``born in London''} being more specific than \emph{``born in England''}.

\textbf{Remarks.}\\
1. These counts should not be read as real-world demographics, since Wikidata is sparse and does not cover all entities. Still, it is a reasonable proxy for measuring such counts at scale. The procedure we describe can be done with any knowledge source that can produce such counts, including a more complete database if one becomes available.\\
2. Importantly, our framework is agnostic to the definition of information: an alternative information measure can be plugged in without altering SA itself. In fact, SA (and the threshold selection algorithm in Section~\ref{sec:threshold_alg}) can be applied without any information measure; in this case, retained information is not quantifiable, and coverage cannot be reported.

\subsection{RC Curves and AURC}
Selective abstraction induces a trade-off between reliability and specificity, measured by risk and coverage. We summarize this trade-off using \emph{Risk-Coverage (RC) curves} \cite{El-Yaniv+Weiner}, which plot the risk of retained content as a function of coverage. An example RC curve is shown in Figure~\ref{fig:example_rc_curve} (right).\\
Given a confidence scoring function $\kappa(\cdot)$ and a threshold $\theta$, $SA$ produces an abstracted response $T'(\theta)$. For each $\theta$, we compute its coverage $\phi(T'(\theta), T)$ and risk $R(T'(\theta))$, as defined in the preceding subsections. Varying $\theta$ over the range of $\kappa$ traces an RC curve.\\
We summarize each RC curve using the \emph{area under the RC curve (AURC)}, defined as
$
\mathrm{AURC} \;=\; \int_{0}^{1} R(\phi)\, d\phi,
$
where $R(\phi)$ denotes the risk achieved at coverage $\phi$. In practice, since evaluation is performed on a finite set of examples, we sweep over all distinct attainable thresholds $\{\theta_k\}_{k=1}^{K}$, induced by the confidence scores in the evaluation set, which yields a finite sequence of operating points with coverages $\{\phi_k\}$ (sorted by increasing order). If the minimum observed coverage is strictly positive, we add an anchor point at $\phi=0$ with risk equal to the risk at the minimum observed coverage. This prevents methods that only attain a narrow, high-coverage range from appearing artificially better due to integrating over a smaller portion of $[0,1]$, and ensures AURC comparisons reflect performance over a common coverage domain.\\
Lower AURC indicates a better risk-coverage trade-off because it corresponds to lower risk on average across the full coverage range.

\begin{table*}[!t]
  \caption{AURC comparison between atom-wise selective abstraction (ours)  and redaction.
For each model, we report AURC (×100) on FactScore and LongFact, and the relative improvement over redaction. Atom-wise SA consistently reduces AURC across all models and both datasets, yielding substantial gains on average.}
  \label{tab:aurc}
  \begin{center}
      \begin{sc}
      \resizebox{0.95\textwidth}{!}{
\begin{tabular}{lcccccc}
\toprule
 & & FactScore & & & LongFact & \\
\cmidrule(lr){2-4} \cmidrule(lr){5-7}
Model & Redaction &Atom-wise SA & Improvement & Redaction &Atom-wise SA & Improvement \\
 &  AURC $\downarrow$ ($\times 100$) & AURC $\downarrow$ ($\times 100$) & $\uparrow$ (\%) &  AURC $\downarrow$ ($\times 100$) & AURC $\downarrow$ ($\times 100$) & $\uparrow$ (\%) \\
\midrule
gpt-oss-120b & $61.30$ & $\mathbf{44.30}$ & $27.73$ & $36.04$ & $\mathbf{28.47}$ & $21.02$ \\
gpt-oss-20b & $73.45$ & $\mathbf{61.59}$ & $16.14$ & $42.60$ & $\mathbf{36.39}$ & $14.59$ \\
qwen3-235b-t & $48.30$ & $\mathbf{38.62}$ & $20.04$ & $27.85$ & $\mathbf{22.89}$ & $17.83$ \\
qwen3-30b-t & $60.98$ & $\mathbf{50.99}$ & $16.39$ & $32.24$ & $\mathbf{28.70}$ & $10.97$ \\
qwen3-30b-i & $57.84$ & $\mathbf{49.03}$ & $15.23$ & $20.87$ & $\mathbf{19.74}$ & $5.41$ \\
llama-3.3-70b-i & $43.20$ & $\mathbf{40.40}$ & $6.48$ & $23.48$ & $\mathbf{21.69}$ & $7.64$ \\
\textbf{Average} & $57.51$ & $\mathbf{47.49}$ & $17.43$ & $30.51$ & $\mathbf{26.31}$ & $13.77$ \\
\bottomrule
\end{tabular}}
      \end{sc}
  \end{center}
\end{table*}

\section{Experimental Setup}\label{sec:exp_setup}
\textbf{Models.}
We conducted our experiments on a range of open-source models:  gpt-oss-120b, gpt-oss-20b \cite{openai2025gptoss120bgptoss20bmodel}, Qwen3-235B Thinking, Qwen3-30B Thinking, Qwen3-30B Instruct \cite{qwen3technicalreport}, and Llama 3.3 70B Instruct \cite{grattafiori2024llama3herdmodels}. 
We include both instruction-tuned models and reasoning models. For reasoning models, both initial generation and selective abstraction are performed in their default reasoning mode.

\textbf{Datasets.}
We conduct our evaluations on responses to prompts taken from two widely used long-form factuality benchmarks: FactScore~\cite{FactScore} and LongFact-Objects~\cite{DBLP:conf/nips/WeiYSLH0TPLHDL24} (abbreviated LongFact). FactScore consists of prompts that ask for biographies of notable individuals. LongFact-Objects contains LLM-generated, open-ended questions across 38 subject areas (e.g., astronomy, biology, and computer science), which elicit detailed long-form answers.\\
We sample prompts from each benchmark and evaluate all models on the same fixed subsets: 36 FactScore prompts and 76 LongFact-Objects prompts.

\textbf{Baselines.}
We compare our proposed abstraction method, \textbf{atom-wise Selective Abstraction}, against the following baselines (prompts are provided in Appendix~\ref{app:baseline_prompts}):\\
\textbf{(1) Selective Redaction:} a standard selective baseline that removes all atoms whose confidence falls below a threshold, without producing abstractions.\\
\textbf{(2) Inline Abstraction:} an adaptation of the aggregation prompt baseline from \citet{yona2024narrowingknowledgeevaluationgap}. This baseline reduces information during generation by instructing the model to adjust the level of granularity in its response according to its uncertainty.\\
\textbf{(3) Self-Revision:} a common approach to post-hoc refinement of generated text: we first generate $T$, and then instruct the model to replace uncertain claims with less specific but higher-confidence versions, or remove them altogether.\\
\textbf{(4) Additional $\theta$-prompted variants}: For \textbf{Inline Abstraction} and \textbf{Self-Revision}, we additionally report results for versions that explicitly include a value of $\theta$ in the prompt, on a subset of settings.

\section{Results}
Across all models and datasets, atom-wise selective abstraction consistently achieves a better risk-coverage trade-off than the competing approaches considered in this work. This advantage is evident in summary metrics, through lower AURC (Table~\ref{tab:aurc}), and is also visible in the risk-coverage curves (Figures~\ref{fig:FactScore_rc_curves_grid} and~\ref{fig:longfact_rc_curves_grid} in Appendix~\ref{app:full_results}).
These results suggest that gradual, atom-wise reduction is an effective and controllable mechanism for reducing information in a way that improves reliability while preserving coverage. \\
Compared to the redaction baseline, atom-wise SA achieves substantially lower AURC (Table~\ref{tab:aurc}): the average improvement in AURC across models is 17.43\% on FactScore and 13.77\% on LongFact. The maximal improvement we have measured was for gpt-oss-120b, where the improvement was 27.73\% and 21.02\% on FactScore and LongFact, respectively. 
The risk-coverage curves (Figures~\ref{fig:FactScore_rc_curves_grid} and~\ref{fig:longfact_rc_curves_grid} in Appendix~\ref{app:full_results}) further confirm these gains: atom-wise SA yields a smooth trade-off that dominates redaction and the other baselines across the full coverage range, consistently achieving lower risk at the same coverage.\\
We note that the relative improvements on LongFact are smaller than on FactScore because LongFact generations start at a lower initial risk at full coverage, leaving less headroom for improvement. One plausible reason FactScore starts at higher risk is that FactScore prompts, which ask for biographies, push LLMs toward generating highly specific factual details, whereas LongFact questions may often be answered in broader terms.\\
Notably, the models with the largest AURC improvements are the same on both benchmarks, gpt-oss-120b and qwen3-235b-thinking, and the least improved model is also the same, llama-3.3-70b-instruct. This cross-benchmark agreement suggests that abstraction performance may be driven by model-specific properties.

\begin{table}[!b]
  \caption{Risk gap at matched coverage between atom-wise SA and the Inline and Self-Revision baselines. For each baseline point, we find the point on the atom-wise SA RC curve with the same coverage, and report the risk difference. Positive values mean lower risk for atom-wise SA. Across both datasets, atom-wise SA consistently achieves substantially lower risk at matched coverage than Inline and Self-Revision.}
  \label{tab:dot_vs_sa_gap}
  \begin{center}
      \begin{sc}
      \resizebox{0.48\textwidth}{!}{
\begin{tabular}{lcccc}
\toprule
 & \multicolumn{2}{c}{FactScore $\Delta$Risk (\%)} & \multicolumn{2}{c}{LongFact $\Delta$Risk(\%)} \\
\cmidrule(lr){2-3} \cmidrule(lr){4-5}
Model & Inline  & Self-Revision & Inline  & Self-Revision  \\
\midrule
gpt-oss-120b & 5.33 & 15.71 & 13.90 & 11.25 \\
gpt-oss-20b & 10.13 & 14.56 & 12.58 & 11.21 \\
qwen3-235b-t & 20.31 & 12.10 & 11.96 & 14.80 \\
qwen3-30b-t & 15.81 & 16.40 & 6.63 & 10.10 \\
qwen3-30b-i & 13.05 & 5.48 & 7.96 & 11.65 \\
llama-3.3-70b-i & -0.70 & 10.42 & 3.47 & 11.47 \\
\textbf{Average} & 10.65 & 12.45 & 9.42 & 11.75 \\
\bottomrule
\end{tabular}}
      \end{sc}
  \end{center}
\end{table}

The \textbf{Inline} and \textbf{Self-Revision} baselines are implemented via prompting. Evaluating them across the full range of $\theta$ would require a separate prompt for each $\theta$, which is infeasible at our scale. Therefore, they appear as single operating points in our main comparison, and we do not report AURC for these methods. Instead, to quantitatively compare against Inline and Self-Revision, we report the risk difference at matched coverage (Table \ref{tab:dot_vs_sa_gap}). Across both datasets, atom-wise SA consistently achieves markedly lower risk at matched coverage, with an average gap of about 10 percentage points.
Observing the RC Curves (Figures~\ref{fig:FactScore_rc_curves_grid} and~\ref{fig:longfact_rc_curves_grid} in Appendix~\ref{app:full_results}), it is noticeable that the Inline and Self-Revision operating points lie closer to the Redaction curve than to the atom-wise SA curve, and in some cases even exceed Redaction, indicating behavior closer to atom deletion than to abstraction.
Finally, we report results for thresholded variants of  the \textbf{Inline} and \textbf{Self-Revision} baselines, that directly inject $\theta$ into the prompts, but this did not produce a meaningful trade-off (See Figure~\ref{fig:threshold_baselines} in Appendix~\ref{app:threshold_baselines}).

\begin{table}[!h]
  \caption{Response statistics. For each model and dataset, we report the average number of atomic claims per response before abstraction (\textsc{Avg \#Atoms}) and the average number of abstraction candidates per atomic claim (\textsc{Avg \#Abs./Atom}), counting the abstraction levels proposed by the model per claim.}
  \label{tab:abstraction_stats}
      \begin{sc}
      \resizebox{0.48\textwidth}{!}{
\begin{tabular}{lcccc}
\toprule
 & \multicolumn{2}{c}{FactScore} & \multicolumn{2}{c}{LongFact} \\
\cmidrule(lr){2-3} \cmidrule(lr){4-5}
Model & Avg \#Atoms & Avg \#Abs./Atom & Avg \#Atoms & Avg \#Abs./Atom \\
\midrule
gpt-oss-120b & 68.8 & 5.6 & 73.9 & 5.3 \\
gpt-oss-20b & 57.0 & 3.9 & 61.0 & 3.7 \\
qwen3-235b-t & 51.2 & 5.0 & 65.4 & 4.9 \\
qwen3-30b-t & 32.9 & 3.6 & 63.4 & 3.4 \\
qwen3-30b-i & 43.9 & 4.2 & 60.0 & 4.5 \\
llama-3.3-70b-i & 34.4 & 3.8 & 32.9 & 3.9 \\
\textbf{Average} & 48.0 & 4.4 & 59.4 & 4.3 \\
\bottomrule
\end{tabular}}
      \end{sc}
\end{table}

Table~\ref{tab:abstraction_stats} provides additional statistics on model responses: the generations are information-rich, and each atomic claim supports multiple abstraction steps, offering plenty of opportunities for gradual information reduction.

\section{Threshold Selection for Target Risk}\label{sec:threshold_alg}
Thus far, we have studied the trade-off between risk and coverage induced by atom-wise SA. 
However, practical deployment typically requires fixing a specific confidence threshold $\theta$, which calls for a principled method for choosing it.
In this section, we propose an algorithm for choosing $\theta$ to satisfy a user-specified target risk.
Our approach is inspired by split conformal prediction \cite{conformal_orig}, using a calibration set to select a confidence threshold that, with high probability, guarantees that the risk on unseen test samples does not exceed a desired target. The full algorithm and its theoretical guarantees are provided in Appendix~\ref{app:ot_algorithm}. Here, we describe the practical procedure and present empirical results.
The algorithm takes as input a language model $\mathcal{M}$, a target risk level $\alpha$, and a calibration set of prompts $\mathcal{D}_{\mathrm{cal}} = \{p_i\}_{i=1}^{n}$. It outputs a confidence threshold $\hat{\theta}$ such that the atom-wise selective abstraction achieves risk at most $\alpha + \epsilon$ on new samples, with confidence at least $1 - \delta$.
Concretely, the algorithm first queries $\mathcal{M}$ on each $p_i$ to obtain an initial response $T_i$, and decomposes it into atomic claims $A(T_i)$, forming a consolidated calibration set of atoms, $A_{cal} = \bigcup_i A(T_i)$. For each atom $a_k\in A_{cal}$, the algorithm computes a critical threshold $\theta_k$, defined as the minimal confidence level at which the abstraction selected for $a_k$ is correct. In particular, if the original atom $a_k$ is already correct, then $\theta_k=0$. 
Finally, the algorithm outputs $\hat{\theta}$, chosen as a quantile of the calibration thresholds.

\textbf{Empirical Results.} We evaluate the proposed algorithm on a range of target risks using the models and datasets detailed in Section \ref{sec:exp_setup}. We repeat each experiment (dataset + model combination) 100 times, sampling a fresh calibration set at each iteration. Each calibration set comprises $30\%$ of the benchmark prompts. We report dataset-level results averaged over all models and repetitions (600 runs per dataset).
Figure \ref{fig:ot_results} shows that the algorithm balances the risk-coverage trade-off, with empirical risk consistently falling close to the target risk. We observe some variability, which is expected given the relatively small amount of calibration and test data, and we expect larger datasets to yield more stable results.

\begin{figure}[!b]
  \begin{center}
{\includegraphics[width=0.98\columnwidth]{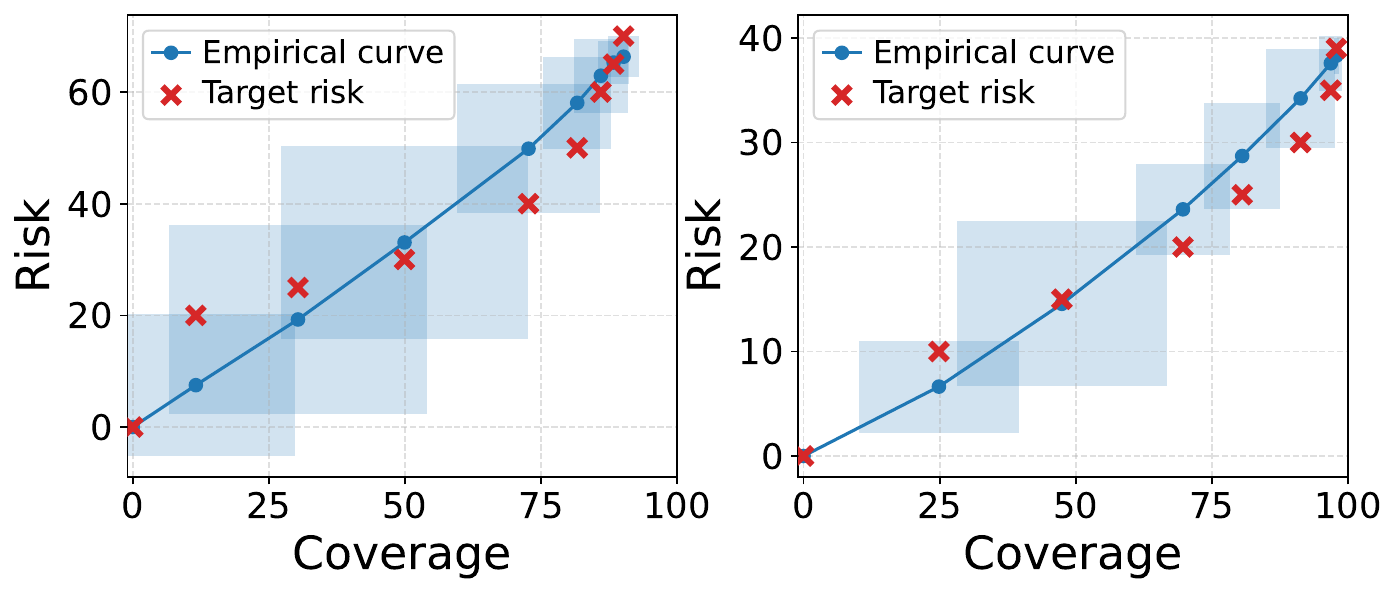}}
\caption{Risk-guided threshold selection on FactScore (left) and LongFact (right), averaged across models (600 runs per dataset). \textcolor{blue}{Blue points} show mean risk at the selected thresholds, bands show one standard deviation, and \textcolor{red}{red X} marks the target risk.}
    \label{fig:ot_results}
  \end{center}
\end{figure}

\section{Concluding Remarks}
This work argues that reliability in long-form generation is not a binary choice between answering and abstaining, but a trade-off between specificity and correctness. We introduce Selective Abstraction as a general framework for navigating this trade-off, and instantiate it through an atom-wise procedure that selectively reduces the information in uncertain claims.
Our empirical evaluation across six open-source LLMs and two open-ended factual benchmarks, shows that atom-wise selective abstraction consistently outperforms standard baselines, including redaction, inline abstraction, and self-revision. Specifically, we observed improvements in AURC of up to 27.73\%.
By operating at the level of individual claims, the proposed method avoids abrupt information loss from claim deletion and provides targeted, claim-specific control over uncertainty in long-form outputs.

\textbf{Future work.}
(1) Confidence scoring function: the selection and thresholding of abstractions inherently depend on the quality of these scores. While we justify our use of verbal confidence and compare it to alternatives in Appendix~\ref{app:kappa}, we recognize that verbalized scores are imperfect, and expect stronger confidence scoring to further enhance performance.
(2) Treating atoms as independent is a common and practical default; however, modeling dependencies between claims (e.g., via a dependency graph) may further improve abstraction quality.

Fundamentally, we view Selective Abstraction as a step toward generation paradigms that reason about the amount of detail in a response. In high-stakes settings, less specific responses are preferable to confident errors.






\bibliography{references}
\bibliographystyle{plainnat}

\appendix
\onecolumn
\raggedbottom

\section{Atom-wise Selective Abstraction Examples}\label{app:sa_examples}
We provide qualitative examples of atom-wise selective abstraction (SA) across multiple models spanning both the FactScore and LongFact benchmarks. 
Each example shows the original response ($\theta=0$), and the output of atom-wise SA produced at different confidence thresholds $\theta$. \\\\

\begin{figure}[!h]
\begin{tcolorbox}[thetabox,title={Prompt: Can you explain the viral component of ALS Association's ``Ice Bucket Challenge'' marketing campaign?\\ 
model: Qwen3-30b-thinking\\
dataset: LongFact\\
$\theta = 0$, Risk=$54.72\%$, Coverage = $100\%$}]
The ALS Association created a campaign – the “Ice Bucket Challenge” – that took place in the 21st century and quickly became a classic example of success. The challenge was driven by psychological, strategic and other factors and relied on a simple requirement: dumping a bucket of ice water over one’s head. No special skills, equipment or cost were needed, and the act could be completed in under 30 seconds.

The design encouraged participants to share a video, name friends to take the challenge, and donate if they were not tagged. By leveraging existing networks, it generated social pressure and chain‑reaction sharing. Within one week, 1.1 million videos were posted on Facebook, and the campaign ultimately attracted 17 million participants worldwide, producing hundreds of thousands of YouTube videos and hundreds of millions of social‑media impressions.

High‑profile figures such as Bill Gates (who also brought extensive online reach), Oprah, Justin Timberlake, LeBron James, Mark Zuckerberg and Beyoncé took part, adding public‑figure participation to the mix. The challenge’s humorous content, clear cause and simplicity helped create a strong social identity and a sense of efficacy around the effort.

The campaign raised ALS awareness dramatically, lifting the percentage of people who knew about the disease from a low to a high level in at least one country. It generated \$115 million in just eight weeks and an additional \$2.8 million during the 2010s, funding projects, a discovery and other ALS‑related initiatives. Its global reach spanned many countries.

Supporting the effort, the ALS Association collaborated with another organization, created a hashtag, avoided commercial activity, and focused on engagement. Dr. Robert Heimann, associated with the ALS Association, summed it up: “We didn’t invent a bucket. We created the system for it to spread.” Because of its unprecedented viral spread, the Ice Bucket Challenge has been studied as a notable viral campaign in modern marketing history.
\end{tcolorbox}
\end{figure}

\begin{tcolorbox}[thetabox,title={$\theta = 90$, Risk=$35.00\%$, Coverage = $66.86\%$}]
The ALS Association created a campaign in the 21st century known as the Ice Bucket Challenge, which has become a classic example of success. Its simplicity—requiring only a quick dump of ice water over the head, with no special skills, equipment, or cost, and taking under a minute—made it easy for anyone to join. The campaign relied on sharing and leveraged existing networks, creating social pressure that generated chain reactions. In just one week, more than one million videos were posted on Facebook, and participation spread globally, reaching many countries and attracting a large number of participants.

High‑profile figures amplified the effort: Bill Gates (who also had considerable online reach), Oprah, Justin Timberlake, LeBron James, Beyoncé, and Mark Zuckerberg all took part, illustrating the public‑figure participation that helped the challenge gain momentum. The initiative raised ALS awareness, generated \$2.8 million in the 2010s, funded projects, and supported the ALS Association’s collaboration with other organizations. Dr. Robert Heimann was associated with the ALS Association. Because of its impact, the Ice Bucket Challenge has been studied as a viral campaign in modern marketing history.
\end{tcolorbox}

\begin{tcolorbox}[thetabox,title={$\theta = 95$, Risk=$25.81\%$, Coverage = $49.35\%$}]
The ALS Association created a campaign called the Ice Bucket Challenge in the 21st century. It was a marketing campaign in marketing history that raised ALS awareness, generated \$2.8 million in the 2010s, and funded projects. The ALS Association collaborated with an organization to amplify the effort. 

The challenge required dumping water, needed no special skills or equipment, was low‑cost, could be completed in under a minute, and was noted for its simplicity. It relied on sharing, leveraged existing networks, and generated chain reactions. Videos with production value spread the message, giving the campaign global reach across many countries. 

Public‑figure participation added social influence: Bill Gates, who also had online reach, took part; Oprah participated in an event; and Beyoncé joined an online activity. These elements—cause, public‑figure involvement, and the viral mechanics—made the Ice Bucket Challenge a classic example of success and created a lasting concept in digital activism.
\end{tcolorbox}
\vspace{15mm}

\begin{tcolorbox}[thetabox,title={Prompt: What is the Marburg virus outbreak of 1967?\\ 
model: Qwen3-235b-thinking\\
dataset: LongFact\\
$\theta = 0$, Risk=$25.93\%$, Coverage = $100\%$}]
The 1967 Marburg virus outbreak was the first recorded emergence of a filovirus in humans and marked a significant moment in science. The epidemic began in Marburg, Germany, and also involved the Institute of Virology in Belgrade, Yugoslavia. African green monkeys imported from Uganda—and used to develop polio vaccines with monkey‑kidney cells—were identified as incidental hosts, and the virus was named after Germany.

The outbreak produced 31 confirmed human cases, including six in Europe, and resulted in multiple fatalities with a case‑fatality rate of at least 20\%. Transmission was limited to individuals with occupational exposure and close contacts, occurring through direct contact with infected animal blood or tissues in laboratory settings. Because the spread was confined to these groups, the disease did not become a global pandemic. Containment was achieved by patient isolation and strict barrier nursing.

Clinically, Marburg virus disease caused headache, vomiting, bleeding, and, in severe cases, further complications. In 1967 there was no specific treatment or vaccine available. Nonetheless, research on the outbreak led to the identification of the Filoviridae family, established Marburg virus as the prototype filovirus, and laid the groundwork for understanding Ebola virus (identified in 1976) as well as for developing monoclonal‑antibody therapies against Ebola. The findings also highlighted the risk of wildlife trade introducing pathogens, exposed shortcomings in existing laboratory protocols, and prompted stricter regulations for pathogen handling, including the development of BSL‑4 laboratories.

Marburg virus is endemic in Africa, with documented presence in Ghana, and remains distinct from Ebola virus. Fruit bats (Rousettus aegyptiacus) are now recognized as the reservoir host for filoviruses. The 1967 outbreak remains an important example in public health, illustrating how human encroachment into wildlife habitats and inadequate safety measures can trigger serious disease threats and public‑health emergencies.
\end{tcolorbox}

\begin{tcolorbox}[thetabox,title={$\theta = 90$, Risk=$13.73\%$, Coverage = $83.34\%$}]
The 1967 Marburg virus outbreak was the first recorded emergence of a filovirus in humans and marked a significant moment in science. Occurring in Marburg, Germany, at a European laboratory facility, the episode produced 31 confirmed human cases in Europe, multiple fatalities and a case‑fatality rate of at least 20 percent. Transmission was limited to individuals with occupational exposure and close contacts, and the outbreak did not spread globally.

The outbreak was linked to African green monkeys that had been imported from Uganda; these primates are incidental hosts for Marburg virus and were later used to develop vaccines. The virus was named after the German city of Marburg and is recognized as the prototype filovirus, distinct from Ebola virus, which was identified later in 1976. Marburg virus is present and endemic in Africa, with fruit bats (Rousettus aegyptiacus) identified as the reservoir host for a filovirus.

Clinically, Marburg virus disease causes headache, vomiting, bleeding and, in severe cases, complications. In 1967 there was no specific treatment or vaccine for the disease, and care relied on patient isolation and strict barrier nursing. The outbreak exposed problems in laboratory protocols and highlighted how inadequate biosafety and human activity can trigger emerging health threats. Research prompted by the event led to the identification of the Filoviridae family, laid groundwork for understanding Ebola virus, and advanced the development of treatments for viral hemorrhagic fevers. Consequently, the 1967 Marburg outbreak remains an important example in public‑health history, illustrating both the risks of infectious diseases and the critical need for robust biosafety measures.
\end{tcolorbox}

\begin{tcolorbox}[thetabox,title={$\theta = 98$, Risk=$5.56\%$, Coverage = $47.45\%$}]
The 1967 Marburg virus outbreak, which occurred in Marburg, Germany, marked a significant moment as the first recorded emergence of a filovirus in humans. The episode involved 31 confirmed human cases in Europe, multiple fatalities and a double‑digit case‑fatality rate, underscoring the severe health problems the disease can cause, including gastrointestinal symptoms and complications in severe cases. Transmission occurred through contact with infected bodily fluids, highlighting that the virus spreads by direct human‑to‑human contact.

At the time, African green monkeys imported from Africa and used in medical research served as hosts for the virus, leading researchers to name the pathogen after the German town where the outbreak was identified. In 1967 there was no specific treatment or vaccine available for Marburg virus disease. The virus is recognized as the prototype filovirus and is distinct from Ebola virus, which would not be identified until 1976. Although later outbreaks have shown very high case‑fatality rates, the 1967 event did not spread globally and remains a frequently cited example in scientific discussions of emerging infectious diseases.

Ecological studies have identified fruit bats (Rousettus aegyptiacus) as the reservoir host for a related pathogen, reinforcing the importance of animal reservoirs in the ecology of filoviruses.
\end{tcolorbox}
\vspace{15mm}

\begin{tcolorbox}[thetabox,title={Prompt: Tell me a bio of George Cukor\\ 
model: gpt-oss-20b\\
dataset: FactScore\\
$\theta = 0$, Risk=$69.77\%$, Coverage = $100\%$}]
George Cukor was born on August 4 1899 in St. Paul, Minnesota, USA, to Polish‑immigrant parents. He left home as a teenager and began working as a choreographer in a dance troupe, later directing a performance troupe and choreographing movie sequences. 

Cukor’s film directing career began in the 1920s with The Freehold and continued with titles such as The Student Prince (1936), The King’s Speech (1937) and The Philadelphia Story (1940). The Philadelphia Story is regarded as one of the best comedies; for it Cukor received an Academy Award nomination for Best Director and won the Oscar for Best Screenplay for his adaptation. He also produced My Man Godfrey (1940). 

He worked closely with leading actresses, directing Katharine Hepburn in Woman of the Year and Bringing Up Baby (1938), and collaborating with Ingrid Bergman, whom he directed in The Light of St. Mary. Cukor maintained a long‑term contract with Paramount Pictures throughout the 1930s‑1950s and later opened Peter Colby–George‑Cukor Enterprises. 

Known for an “invisible hand” directing style, he is credited with inventing or popularizing the “slow‑cut” technique. His later films include The Diary of Anne Frank (1959), the epic How the West Was Won (1962) and I’m Gonna Die Tonight (1986). He was nominated for an Academy Award for Best Picture as well as for other Academy Awards. 

In recognition of his contributions, Cukor received the AFI Life Achievement Award in 1979 and an award from a theatre organization in the 1990s. He advocated for adaptation, influenced filmmakers such as Steven Spielberg and inspired writers. 

George Cukor died on August 11 1993 in Beverly Hills, California, at the age of 94. His papers are archived at Columbia University Libraries. Throughout his career, which spanned early cinema to the studio system, he was celebrated as a film director known for his work with actors.
\end{tcolorbox}

\begin{tcolorbox}[thetabox,title={$\theta = 75$, Risk=$38.24\%$, Coverage = $68.16\%$}]
George Cukor was born in the 1890s in St. Paul, Minnesota, the son of Polish immigrants. He left home as a teenager and began his career working in a troupe. 

Cukor went on to become a film director whose career spanned the early days of cinema through the Hollywood studio system. He held a long‑term contract with Paramount Pictures during the 20th century and was known for using an “invisible hand” directing style and for advocating adaptation of material for the screen. 

Among his most celebrated works is The Philadelphia Story (1940), which is considered one of the best comedies ever made; for this film he received an Academy Award nomination for Best Director. He also directed Katharine Hepburn in Bringing Up Baby (1938) and Woman of the Year, and worked with Ingrid Bergman on another film. In the 1950s he directed The Diary of Anne Frank. 

Cukor’s contributions to cinema were recognized with the AFI Life Achievement Award in 1979. He died on August 11 1993 in Beverly Hills, California, at the age of 94. Throughout his long career, he was especially noted for his work with actors.
\end{tcolorbox}

\begin{tcolorbox}[thetabox,title={$\theta = 98$, Risk=$16.67\%$, Coverage = $17.92\%$}]
George Cukor, who was born in the 1890s, built a career as a film director known for his work with actors. He directed several films, including a 1940s production that paired Katharine Hepburn with the role of Woman of the Year. Throughout his career he collaborated with performers and, in particular, with actresses. His directing was marked by a distinctive style that earned him both nominations and awards. At the end of his life, Cukor was an older adult.
\end{tcolorbox}
\vspace{15mm}

\begin{tcolorbox}[thetabox,title={Prompt: Tell me a bio of Darrius Heyward-Bey\\ 
model: Qwen3-30b-instruct\\
dataset: FactScore\\
$\theta = 0$, Risk=$58.70\%$, Coverage = $100\%$}]
Darrius Heyward‑Bey (born January 23 1989 in Oakland, California) is a former American football player who competed in the National Football League. Growing up in a family that valued physical activity, he emerged as a standout athlete at McClymonds High School in Oakland and earned national recognition for his speed and overall athletic ability.

After committing to the University of Maryland, Heyward‑Bey played college football in 2008 and had a breakout season that led the nation with 15 receiving touchdowns and placed second with 1,392 receiving yards. His performance earned him unanimous All‑American honors and the Biletnikoff Award, which is presented to the nation’s top college receiver.

Elevated by his college success, Heyward‑Bey entered the 2009 NFL Draft as the highest‑drafted wide receiver in Maryland history, being selected 14th overall by the Oakland Raiders. He spent 2009‑2013 with the Raiders, where he showcased his ability to make impactful plays and accumulated over 100 receptions, more than 3,000 receiving yards, and at least one touchdown. Consistency and injuries, however, limited his production during that tenure.

Heyward‑Bey later joined the Baltimore Ravens (2014), the Cleveland Browns (2015), and the San Francisco 49ers (2016) before retiring from professional football in 2017. Known for his speed on offense and overall physical prowess, his career was also hampered by injuries and personal issues.

Since retiring, Heyward‑Bey has stayed involved in football through mentorship of young athletes and other work within the sport. He has spoken openly about his personal experiences, mental‑health challenges, and personal growth, and he now advocates for greater support for athletes within the sports community.
\end{tcolorbox}

\begin{tcolorbox}[thetabox,title={$\theta = 95$, Risk=$45.95\%$, Coverage = $74.00\%$}]
Darrius Heyward‑Bey, born on January 23, 1989 in Oakland, California, emerged as an athlete at McClymonds High School in Oakland and earned national recognition for his athletic ability, becoming a wide‑receiver prospect. He played college football at the University of Maryland in the 2000s, where he led the nation in receiving touchdowns with 15, finished second in receiving yards with 1,392, earned unanimous All‑American honors and won the Biletnikoff Award (an honor given to a college athlete). 

His draft stock was elevated, and he was selected 14th overall in the 2009 NFL Draft by the Oakland Raiders, making him the highest‑drafted wide receiver from Maryland in NFL history. Heyward‑Bey played for the Raiders from 2009 to 2013, later joining the Baltimore Ravens and the Cleveland Browns during the 2010s. Over his NFL career he recorded more than 100 receptions, amassed over 3,000 receiving yards and logged at least one receiving touchdown; he showcased his ability to make impactful plays in the league, though he was not a standout player during his tenure with the Raiders. He retired from professional football in 2017. 

Heyward‑Bey is known for his physical ability and his speed on offense, has spoken openly about his life, and advocates within the sports community.
\end{tcolorbox}

\begin{tcolorbox}[thetabox,title={$\theta = 99$, Risk=$17.65\%$, Coverage = $27.50\%$}]
Darrius Heyward‑Bey was born in the United States and attended high school in California before playing college football, where he finished second in receiving yards and earned national recognition. The Biletnikoff Award, presented to a top college receiver, highlights the level of competition he faced. In the 2009 NFL Draft, Heyward‑Bey was selected 14th overall by the Oakland Raiders, launching a professional career as a football player from Maryland. He went on to play in the National Football League, recording more than 100 receptions, over 3,000 receiving yards, and at least one receiving touchdown during his NFL tenure.
\end{tcolorbox}
\vspace{15mm}

\newpage
\section{Selective Abstraction Prompts}\label{app:sa_prompts}

\begin{tcolorbox}[title=Atomization Prompt]
\textbf{\# Role and Objective  }\\
You are an Atom Extraction Assistant. Your task is to break down a given text into short, self-contained sentences called atoms. Each atom should capture exactly one entity and one piece of information about that entity.\\

\textbf{\# Definition of an Atom  }\\
Each atom must capture exactly one entity and one piece of information about that entity. The atom must be in the format:  
[entity] [information about the entity].  
Both the entity and the information must be enclosed in square brackets.\\

\textbf{\# Instructions  }\\
- Each atom must capture exactly one entity and one piece of information about that entity.  \\
- Atoms must be as fine-grained as possible: each should express a **single, indivisible fact**.\\
- Do not merge multiple facts into one atom, even if they appear in the same clause or sentence.  \\
- Ensure that each atom is independent and can be easily understood without relying on other atoms.  \\
- If a sentence presents multiple facts—such as nationality, profession, or achievements—split them into separate atoms.  \\
- The entity must be a specific, concrete noun or noun phrase. Avoid vague terms like "he", "she", "they", or "these". Entities should be simple and specific, such as "Albert Einstein", "the Eiffel Tower", "the Mona Lisa", "Australia", "Apollo 11", etc.  \\
- These guidelines are crucial to ensure that each atom can be independently searched in knowledge bases to verify its factual accuracy.  \\
- Ensure that every piece of information given in the text is captured in the resulting atoms.\\

\textbf{\# Output Format  }\\
Output a numbered list of atoms:  \\
1. [Entity] [Fact].  \\
2. [Entity] [Fact].  \\
...\\

\textbf{\# Examples  }\\
\textbf{\#\# Example 1  }\\
\textbf{Input Text: }
Jennifer Coolidge is an American actress. Known for her work in the comedy genre, Coolidge is the recipient of several accolades, including a Golden Globe Award and two Primetime Emmy Awards.\\
\textbf{Atoms:  }\\
1. [Jennifer Coolidge] [is an American].  \\
2. [Jennifer Coolidge] [is an actress].  \\
3. [Jennifer Coolidge] [is known for her work in the comedy genre].  \\
4. [Jennifer Coolidge] [has received several accolades].  \\
5. [Jennifer Coolidge] [has received a Golden Globe Award].  \\
6. [Jennifer Coolidge] [has received two Primetime Emmy Awards].\\

\textbf{\#\# Example 2  }\\
\textbf{Input Text:  }
Albert Einstein was a German-born theoretical physicist who is best known for developing the theory of relativity. Einstein also made important contributions to quantum mechanics.  

\textbf{Atoms:  }\\
1. [Albert Einstein] [was born in Germany].  \\
2. [Albert Einstein] [was a theoretical physicist].  \\
3. [Albert Einstein] [is best known for developing the theory of relativity].  \\
4. [Albert Einstein] [made important contributions to quantum mechanics].\\

\textbf{\#\# Example 3  }\\
... (more examples truncated)
\end{tcolorbox}

\begin{tcolorbox}[title=Atoms Confidence Prompt]
\textbf{\# Role and Objective}\\
You are a careful and critical evaluator. Your task is to assess the factual correctness of each statement and provide a confidence score that reflects how likely it is that the statement is true.\\

\textbf{\# Instructions}\\
You will be given a list of factual statements. For each one, produce a short explanation of your reasoning, then provide a confidence estimate between 0 and 100.\\

\textbf{\# Output Format}\\
Output a numbered list. For each item, follow this format:\\
1. [Statement]  \\
   Reasoning: [your reasoning here]  \\
   Confidence: [score]\\

- Use the same index numbers as in the list you are given.\\
- The confidence score must be a number between 0 and 100.\\
- Keep reasoning concise but informative — 1 to 3 sentences.\\
- Do not repeat the statement in the reasoning.\\

\textbf{\# Confidence Scoring Guidelines}\\
- Reasoning should explain why the fact is likely true, plausible, or questionable.\\
- If you are unsure, assign a lower score and use your reasoning to clarify.\\
- Do not fabricate information.\\

\textbf{\# Reasoning Steps}\\
1. Read the statement carefully.\\
2. Consider whether it expresses a known fact, plausible claim, or questionable statement.\\
3. Use your internal knowledge and reasoning to assess plausibility and factuality.\\
4. Write a brief explanation of your reasoning.\\
5. Assign a confidence score based on your belief that the statement is true.\\

\textbf{\# Review and Guidance}\\
Think step by step. For each statement, explain your reasoning clearly and concisely before giving your final confidence score.
\end{tcolorbox}

\begin{tcolorbox}[title=Abstraction Prompt]
\textbf{\# Role and Objective}\\
You are an Abstraction Assistant. Your task is to take a factual statement about an entity and generate a sequence of increasingly general statements (called abstractions). Each abstraction must remain true if the previous one is true. Your goal is to reduce uncertainty by generalizing the part of the statement you are least confident about.\\

\textbf{\# Instructions}\\
You will be given:\\
- A factual statement in this exact format:  \\

\textbf{\# Output Format}\\
Output a numbered list, with one step per abstraction.
Every statement must use the exact format [ENTITY] [FACT]. with both parts in square brackets. Do not drop the brackets.
\\\\
1. [ENTITY] [FACT].  
 Reasoning: (your reasoning here)\\  
2. [ENTITY] [GENERALIZED FACT].  
   Reasoning: (your reasoning here)  \\
...  \\
K. STOP.
   Reasoning: (why no further generalization is meaningful)\\

\textbf{\# Reasoning Steps}\\
1. Start with the original sentence.\\
2. Work only inside the FACT (the second bracket). Do not change the ENTITY.\\
3. If the FACT contains multiple details, choose the one you are least confident about. \\
4. Generalize that detail to a broader term. Make the smallest logical generalization at each step.\\
5.  The new statement must be logically implied by the previous one — that is, if the previous statement is true, the new one must also be true.\\
6. Modify only one detail per step. The rest of the sentence must remain unchanged.\\
7. Keep the sentence format fixed: [ENTITY] [FACT].\\
8. For each abstraction, write 1–3 sentences of reasoning explaining why you chose that part to generalize, and why the generalization is valid. \\
9. Continue until further abstraction would make the sentence trivial or uninformative. In that case, output `STOP` and explain why.\\

\textbf{\# Generalization Examples}\\
- Date: "March 3, 1920" $\rightarrow$ "March 1920" $\rightarrow$ "1920" $\rightarrow$ "1920s" $\rightarrow$ "20th century"  \\
- Location: "Paris" $\rightarrow$ "Île-de-France" $\rightarrow$ "France" $\rightarrow$ "Europe"  \\
- Nationality: "Texan" $\rightarrow$ "Southern American" $\rightarrow$ "American" \\
- ... (more examples truncated)\\

\textbf{\# Examples}\\
\textbf{\#\# Example 1  }\\
\textbf{Input:} [Marie Curie] [was born in Warsaw, Poland]. \\
\textbf{Abstraction Sequence:  }\\
1. [Marie Curie] [was born in Warsaw, Poland].    Reasoning: I’m less confident about the exact city than the country.  \\
2. [Marie Curie] [was born in Poland].  
   Reasoning: Warsaw is a city in Poland, so this is a valid generalization.  \\
3. [Marie Curie] [was born in Europe].  
   Reasoning: Poland is a European country.  \\
4. STOP.
   Reasoning: Further generalization would be too vague to retain meaning.\\
\textbf{\#\# Example 2  }\\
... (more examples truncated)\\

\textbf{\# Review and Guidance}\\
Think step by step. Each abstraction should be a clean, single-step generalization from the previous one, guided by your uncertainty. Focus on generalizing the part you are least confident about, and reflect that in your reasoning.
\end{tcolorbox}

\begin{tcolorbox}[title=Abstractions Confidence Prompt]
\textbf{\# Role and Objective}\\
You are a careful and critical evaluator. Your task is to assess the factual correctness of each statement and provide a confidence score that reflects how likely it is that the statement is true.\\

\textbf{\# Instructions}\\
You will be given a list of factual statements. The first statement will be the original atom, for which you will be also given a confidence score. The confidence of this first statement should be treated as evidence and as a grounding signal for evaluating the related statements that follow. For each subsequent statement, produce a short explanation of your reasoning, then provide a confidence score between 0 and 100. The confidence score for the original atom should be the same as the one you are given.\\

\textbf{\# Output Format}\\
Output a numbered list. For each item, follow this format:\\
1. [Statement]  \\
   Reasoning: [your reasoning here]  \\
   Confidence: [score]\\

- Use the same index numbers as in the list you are given.\\
- The confidence score must be a number between 0 and 100.\\
- Keep reasoning concise but informative — 1 to 3 sentences.\\
- Do not repeat the statement in the reasoning.\\

\textbf{\# Confidence Scoring Guidelines}\\
- Reasoning should explain why the fact is likely true, plausible, or questionable.\\
- Use the confidence of the first statement as your starting point. If a later statement would be true whenever the first statement is true, then its confidence should be at least as high as the confidence of the first statement. If the later statement is more general, then its confidence should be higher.\\
- If you are unsure, assign a lower score and use your reasoning to clarify.\\
- Do not fabricate information.\\

\textbf{\# Reasoning Steps}\\
1. Read the statement carefully.\\
2. Consider whether it expresses a known fact, plausible claim, or questionable statement.\\
3. Use your internal knowledge and reasoning to assess plausibility and factuality.\\
4. Write a brief explanation of your reasoning.\\
5. Assign a confidence score based on your belief that the statement is true.\\

\textbf{\# Review and Guidance}\\
Think step by step. For each statement, explain your reasoning clearly and concisely before giving your final confidence score.
\end{tcolorbox}

\begin{tcolorbox}[title=Reconstruction Prompt]
You are given a list of statements. Your task is to write a coherent response that is supported solely by these claims.\\\\
\textbf{Rules:}\\
- Use only the information in the list. Do not add new facts or infer missing details.\\
- Do not repeat statements that are duplicates, or near duplicates\\
- Omit any statement that is vague, tautological, or low-information.\\
- You may change the order of the statements if it contributes to the coherence and fluency of the text.\\
- Ensure the final text is fluent, well structured, and consistent with the remaining statements. 

Statements: \{statement list\}

\end{tcolorbox}

\newpage
\section{Abstraction Generation and Confidence Elicitation: Additional Discussion}\label{app:design_choices}

This appendix expands on Section~\ref{sec:selective_abstraction} (Selective Abstraction), focusing on Stage~3 (Abstraction) in the end-to-end pipeline. In this stage, low-confidence atoms are abstracted into less specific alternatives in order to trade specificity for reliability. We clarify the motivation behind Step (2), which constructs a sequence of abstraction candidates for each atom and elicits confidence scores for these candidates conditioned on the atom’s original confidence.

\textbf{Why generate a sequence of abstractions?}\\
Selective abstraction requires choosing a minimally abstracted variant that satisfies the confidence threshold. Generating a sequence of abstractions with gradually reduced specificity provides a discrete, ordered set of candidates for this selection. This formulation makes thresholding well-defined: rather than relying on a single unconstrained rewrite, the method can select the most specific abstraction whose confidence exceeds the threshold. In addition, eliciting confidences jointly for related abstractions encourages a coherent progression in confidence as information is reduced, aligning abstraction depth with reliability.

\textbf{Why condition abstraction confidence on the atom’s confidence?}\\
Conditioning abstraction confidence elicitation on the atom’s original confidence serves two purposes. First, it anchors the abstraction confidences to the uncertainty of the underlying factual content, encouraging abstraction confidence to reflect reduced specificity rather than a complete re-evaluation of factual correctness. Second, this conditioning helps maintain comparability across atoms by evaluating each atom–abstraction group relative to its own baseline confidence, reducing drift caused by prompt-level or statement-level variability. Together, these effects support a stable relationship between abstraction depth and confidence, which is essential for consistent threshold-based selection.

Additionally, assigning confidence scores within the same elicitation call may help relatively calibrate abstraction confidences across atoms by anchoring them to a common scale.

\newpage
\section{Evaluation}\label{app:prompts}
\subsection{Correctness Agent}\label{app:correctness_agent}

\begin{figure}[!h]
\begin{tcolorbox}[title=Correctness Agent Instructions, breakable]
You are a precise Wikipedia-only fact-checker.

\textbf{Tasks:}
\begin{enumerate}
  \item Use the tools to \textbf{SEARCH} and \textbf{READ} Wikipedia (and only Wikipedia).
  \item Gather relevant snippets from opened pages as evidence.
  \item Decide whether the claim is \textbf{SUPPORTED} or \textbf{UNSUPPORTED} by Wikipedia.
\end{enumerate}

\textbf{Label policy:}
\begin{itemize}
  \item ``SUPPORTED'' only if Wikipedia contains clear supporting evidence.
  \item ``UNSUPPORTED'' if (a) the claim is contradicted/refuted or (b) you cannot find supporting evidence.
\end{itemize}

\textbf{Output policy:}
\begin{itemize}
  \item You \textbf{MUST} use the tools to obtain evidence (do not rely on prior knowledge).
  \item Cite evidence using page titles/URLs and short quotes taken from spans.
  \item Your final output \textbf{MUST} follow the provided JSON schema exactly (Structured Outputs). Keep rationale $\leq$ 2 sentences, no chain-of-thought.
\end{itemize}

\textbf{Output Schema:}
\begin{lstlisting}[  language=json,
  basicstyle=\ttfamily\small,
  columns=fullflexible,
  keepspaces=true,
  showstringspaces=false]
{
  "label": "SUPPORTED" | "UNSUPPORTED",
  "rationale": "<string, max 280 chars, <3 sentences>",
  "evidence": [
    {
      "title": "<string>",
      "url": "<string>",
      "quote": "<string, max 600 chars>"
    },
    ...
  ]
}
\end{lstlisting}

\textbf{Workflow:}
\begin{itemize}
  \item Start with \texttt{search\_wikipedia(query=claim)}.
  \item Open promising pages with \texttt{open\_wikipedia\_page(title=...)}.
  \item If you think the infobox may contain relevant information, call \texttt{get\_infobox(title=...)}.
  \item For each opened page, call \texttt{rank\_page\_spans(title, query=claim or sub-question)}.
  \item Decide and return the structured result.
\end{itemize}
\end{tcolorbox}
\end{figure}

\subsection{Coverage}\label{app:coverage_prompts}
\begin{tcolorbox}[title=Questions Generation Prompt]
\textbf{\# Role and Objective}\\
You are a Counting Question Assistant. Your task is to convert a factual statement about a specific entity into two counting questions: one that asks about the **broad category** the entity belongs to, and another that asks **how many entities of that category meet the description given in the statement**. This helps assess how specific the factual statement is.\\

\textbf{\# Instructions}\\
You will be given:  
- A factual sentence in the format:  
  [ENTITY] [FACT].  
  The entity always appears at the **start** of the sentence.\\

\textbf{\# Output Format}\\
Output both questions in the following format: \\
- Broad: How many [pluralized broad category] are there?  \\
- Specific: How many [pluralized broad category] [rest of the sentence adapted accordingly]?\\

\textbf{\# Reasoning Steps  }
\begin{enumerate}
    \item 
Identify the entity at the beginning of the sentence. 
\begin{itemize}
    \item 
   This is always the subject of the sentence. 
\end{itemize}
\item 
Classify the entity into a broad category. Use general terms only.  
\begin{itemize}
    \item 
    For people, always use “people”
    \item 
    For places, use terms like “cities”, “countries”, “rivers”, etc.  
    \item 
    For objects, use terms like “books”, “paintings”, “technologies”, etc.  
    \item 
    For abstract concepts, use terms like “languages”, “theories”, etc.
\end{itemize}

\item 
Broad Question: 
\begin{itemize}
    \item 
   Ask how many entities exist in that broad category.  
    \item 
   Use the format:  
     How many [pluralized broad category] are there?
\end{itemize}

\item 
Specific Question:  
\begin{itemize}
    \item 
   Replace the entity with its broad category.  
    \item 
   Keep the rest of the sentence as intact as possible.  
    \item 
   Rewrite it as a natural-sounding question starting with:  
     How many [pluralized broad category]...
\end{itemize}

\item 
Do not use overly specific categories like “French artists” or “famous scientists.” Generalize the entity, not the predicate.
\end{enumerate}

\textbf{\# Examples  }

\textbf{\#\# Example 1  }\\
Input: Frida Kahlo was a Mexican painter.  \\
- Broad: How many people are there?  \\
- Specific: How many people are Mexican painters?\\

\textbf{\#\# Example 2  }\\
Input: Paris is the capital of France.  \\
- Broad: How many cities are there?  \\
- Specific: How many cities are the capital of France?\\\\
... (more examples truncated)
\end{tcolorbox}

\newpage
\section{On the Importance of Choosing a Confidence Scoring Function}\label{app:kappa}
The choice of the confidence scoring function $\kappa$ is critical, as it directly influences the quality of the downstream abstractions. The confidence function determines which atoms are selected for abstraction and the degree to which they are abstracted, thereby shaping the trade-off between information and reliability.


\subsection{Comparison Metrics}
\paragraph{RC Curves and AURC.} We first compare confidence scoring functions directly by the risk-coverage trade-offs they induce: a $\kappa$ that achieves a better trade-off (lower AURC) is preferable \cite{El-Yaniv+Weiner}.

\paragraph{Ranking (AUROC).}
Ranking captures the extent to which the confidence score induces a partial order on samples, where correct samples are assigned higher confidence scores than incorrect samples \cite{523}. In our setting, this property is central to selective abstraction: a $\kappa$ that ranks well will ensure that the atoms that are most likely to be incorrect are abstracted first, since they will be assigned the lowest confidence values. Conversely, poor ranking leads to inefficient abstraction, where correct atoms are unnecessarily abstracted while incorrect atoms are retained.\\
We quantify ranking quality using the area under the ROC curve (AUROC), which measures the probability that a randomly chosen correct atom receives a higher confidence score than a randomly chosen incorrect atom. AUROC therefore reflects how well $\kappa$ separates correct from incorrect content across all possible thresholds. High AUROC implies that abstraction can be focused on genuinely risky atoms, yielding substantial risk reduction for a given loss of information.



\subsection{$\kappa$ Comparison}
We compare the verbalized confidence score used in this paper to the following confidence scores:
\begin{enumerate}
    \item \textbf{Logprob-based scores\footnote{In atom-wise SA, the atomization stage produces atomized text conditioned on the original response. Although the atom tokens largely overlap with the original response, we cannot directly use the atomization logits, since they reflect the model’s behavior under the atomization prompt rather than its confidence in the original generation. Instead, we estimate a logprob-based proxy post hoc by feeding the atomized text back into the LLM and extracting the log-probabilities associated with each atom. While these probabilities need not match those assigned during the original generation, they provide a close proxy for the model’s confidence in the factual content of each atom.}:} We evaluate a range of confidence scores derived from token-level log-probabilities of the generated text. These include sequence-level aggregates such as log-likelihood and (inverse) perplexity, as well as more local measures such as the minimum token log-probability. In addition, we consider simple statistics over token log-probabilities (e.g., mean, median, standard deviation, and interquartile range), and related rank-based signals. Figure~\ref{fig:logprobs_kappas} reports the resulting risk-coverage curves for these variants. Among the logprob-based variants, the best performing confidence score was \emph{log-likelihood}. For an atom $a$ with token sequence $y_{1:m}$, conditioned on the prompt $p$, the log-likelihood confidence is given by:
\[
\kappa_{\mathrm{LL}}(a)\;=\;\frac{1}{m}\sum_{i=1}^{m}\log p\!\left(y_i \mid y_{<i},\,p\right).
\]

    \item \textbf{P(True) \cite{kadavath2022languagemodelsmostlyknow} :} elicits a binary truth judgment from the model for each generated atom (i.e., True or False) and aggregates the resulting probabilities to derive a confidence score as follows:\\
    Let $p_T(a)$ and $p_F(a)$ denote the probabilities assigned to tokens corresponding to \texttt{True} or\ \texttt{False}, then:
    \[
    \kappa_{\text{P(True)}}(a)\;\triangleq\;\frac{p_T(a)}{p_T(a)+p_F(a)}.
    \]
\end{enumerate}

Notably, we restrict all of our confidence scores to be defined over a single output, as we aim to estimate the model’s confidence in each decomposed unit after it has been generated. This requirement rules out sampling-based methods such as Semantic Entropy \cite{DBLP:conf/iclr/KuhnGF23} and LUQ \cite{DBLP:conf/emnlp/Zhang0BC24}, which rely on multiple sampled generations.

\begin{figure*}[!h]
    \centering
    \includegraphics[width=0.85\linewidth]{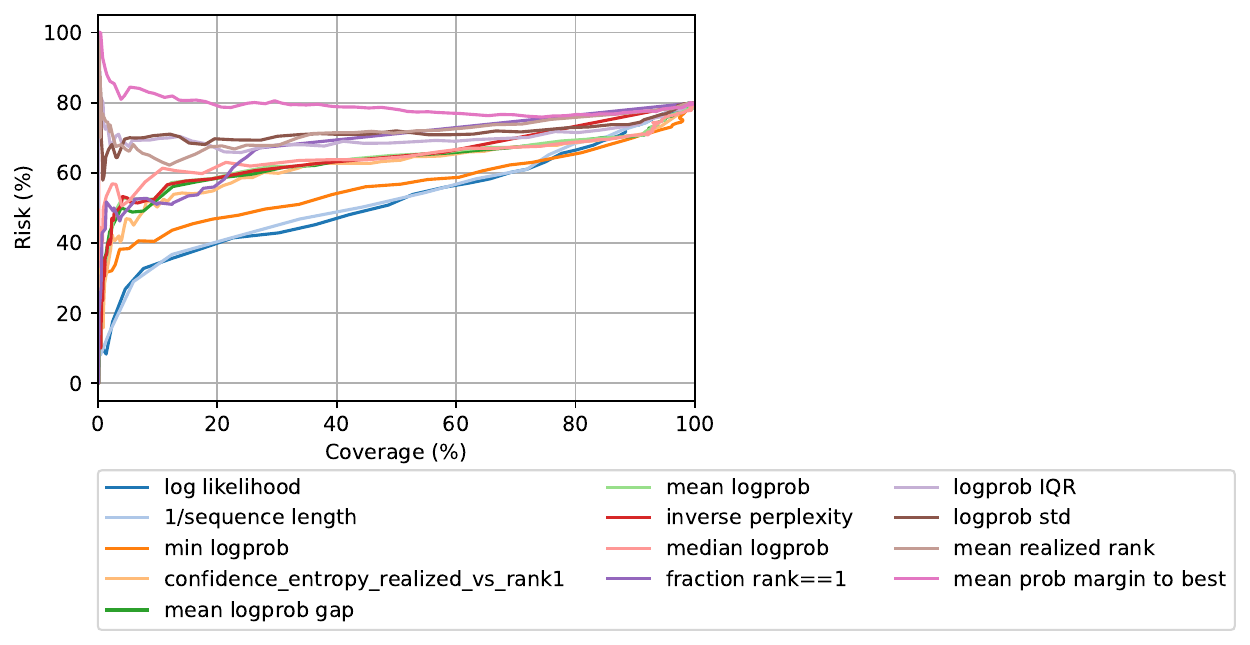}
    \caption{Risk–coverage curves obtained by instantiating $\kappa$ with different logprob-based confidence scores on FactScore using gpt-oss-120b. Legend is sorted by AURC in ascending order.}
\label{fig:logprobs_kappas}
\end{figure*}

\begin{figure*}[!h]
    \centering
    \includegraphics[width=0.45\linewidth]{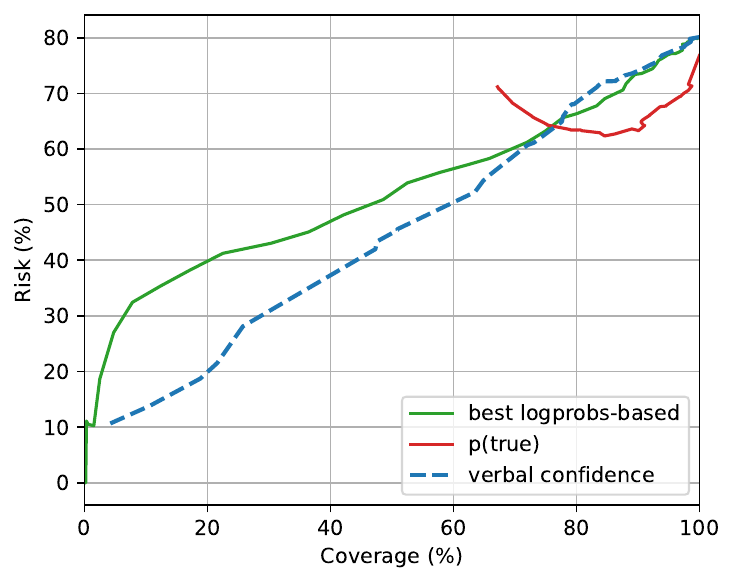}
    \caption{Risk–coverage curves on FactScore with gpt-oss-120b, comparing verbalized confidence, $P(\text{True})$ \cite{kadavath2022languagemodelsmostlyknow}, and the best-performing (lowest AURC) logprob-based baseline (log likelihood).}
\label{fig:alternative_kappas}
\end{figure*}

Figure~\ref{fig:alternative_kappas} compares the RC curves induced by three representative confidence scoring functions: the strongest logprob-based baseline identified in Figure~\ref{fig:logprobs_kappas} (log-likelihood), $P(\text{True})$, and our verbalized confidence. While all three scores enable selective abstraction, they exhibit notably different trade-offs across the coverage range. In particular, logprob-based confidence provides limited separation between reliable and unreliable atoms at low coverage, resulting in relatively high risk even under aggressive abstraction. $P(\text{True})$ improves risk only at high coverage; however, this advantage is short-lived, as risk quickly increases toward the baseline and the improvement is confined to a narrow coverage interval, indicating early saturation. In contrast, verbalized confidence induces a smoother and more favorable risk–coverage trade-off, consistently prioritizing abstraction of less reliable atoms across a wide range of coverage levels.


\begin{table*}[!h]
  \caption{Comparison of confidence scoring functions.
We report AURC ($\times 100$) and AUROC, on gpt-oss-120b on the FactScore benchmark, evaluated at the atom level and on the full set of abstractions across all levels.}
  \label{tab:confidence_scores}
  \begin{center}
      \begin{sc}
      \resizebox{0.5\textwidth}{!}{
\begin{tabular}{lccc}
\toprule
 & AURC & \multicolumn{2}{c}{AUROC} \\
\cmidrule(lr){2-2} \cmidrule(lr){3-4}
Method 
 & ($\times 100$)$\downarrow$ 
 & (atoms)$\uparrow$ 
 & (all)$\uparrow$ \\
\midrule
Verbal confidence 
 & $\mathbf{44.30}$ 
 & $\mathbf{0.74}$ 
 & $\mathbf{0.74}$ \\
Best logprob-based 
 & $51.94$ 
 & $0.69$ 
 & $0.70 $ \\
$P(\text{true})$ 
 & $69.42 $
 & $0.35 $
 & $0.52 $ \\
\bottomrule
\end{tabular}}
      \end{sc}
  \end{center}
\end{table*}

Table ~\ref{tab:confidence_scores} summarizes the performance of the compared confidence scoring functions across three complementary dimensions: the downstream risk–coverage trade-off (AURC) and ranking quality (AUROC). We report AUROC at two granularities: ``atoms'' evaluates ranking over the original atom set, whereas ``all'' evaluates ranking over the full abstraction pool, consisting of each atom and all abstractions generated for it across levels.

Overall, verbalized confidence consistently dominates the alternatives on the metrics most relevant to selective abstraction. It achieves the lowest AURC, indicating a substantially better risk–coverage trade-off when used to guide abstraction. This improvement aligns with its stronger ranking performance: verbal confidence attains the highest AUROC among all methods, both on atoms alone and across all abstraction candidates. This indicates that it more reliably assigns lower confidence to incorrect atoms, resulting in their abstraction.

In contrast, logprob-based confidence scores exhibit weaker ranking quality, which manifests as higher risk at low coverage and inferior AURC. While some logprob-based variants achieve reasonable calibration, their limited ability to separate correct from incorrect atoms leads to inefficient abstraction, where correct atoms are abstracted prematurely and incorrect ones are retained longer.

The P(True) score behaves differently: it suffers from extremely poor ranking performance (AUROC close to random at the atom level). As a result, its gains are confined to a narrow high-coverage regime, after which risk rapidly approaches the baseline. This saturation behavior is also visible in the corresponding RC curves, making P(True) ill-suited for controlling abstraction across a broad coverage range.

Taken together, these results highlight that ranking quality is the primary driver of effective selective abstraction. Verbalized confidence performs best on both AURC and AUROC, yielding the strongest risk-coverage trade-off and the most reliable ordering of atoms by correctness likelihood. For this reason, we adopt verbal confidence as the default confidence scoring function throughout this work.


\newpage
\section{Baseline Prompts}\label{app:baseline_prompts}
\begin{tcolorbox}[title=Inline Prompt]
You will be given a question. Answer the question at a level of granularity that fits your uncertainty.

Question: \{question\}.
\end{tcolorbox}

\begin{tcolorbox}[title=Self-Revision Prompt]
You are an assistant reviewing your own previous answer to a user's prompt.\\

You will be given:\\
- The original user prompt.\\
- Your original answer to that prompt.\\

Your task is to produce a revised answer that REMOVES or REWRITES any part of the answer that you are not confident about.\\

\textbf{Guidelines:\\}
- If you are uncertain about a factual claim, remove it or replace it with a less specific claim that you are confident about.\\
- Do NOT add new information.\\
- Do NOT explain what you are doing; just return the revised answer.\\
- Preserve the general style and structure of the original answer where possible, but it is acceptable to shorten it substantially if many parts are uncertain.\\

Return ONLY the revised answer text, with no extra commentary or headings.\\

Question: \{question\}.
\end{tcolorbox}

\begin{tcolorbox}[title=Thresholded Inline Prompt]
You will be given a question. Answer the question at a level of granularity that fits your uncertainty. Only include information that you are at least \{theta\}\% confident about.\\
Question: \{question\}.
\end{tcolorbox}

\begin{tcolorbox}[title=Thresholded Self-Revision Prompt]
You are an assistant reviewing your own previous answer to a user's prompt.\\

You will be given:\\
- The original user prompt.\\
- Your original answer to that prompt.\\

Your task is to produce a revised answer that REMOVES or REWRITES any part of the answer that you are not confident about.\\

\textbf{Guidelines:\\}
- If you are less than \{theta\}\% confident about a claim, remove it or rewrite it into a less specific claim that you are at least {theta}\% confident about.\\
- If you are uncertain about a factual claim, remove it or replace it with a less specific claim that you are confident about.\\
- If a sentence contains both high-confidence and low-confidence information, rewrite the sentence to keep only the high-confidence portion.\\
- Do NOT add new information.\\
- Do NOT explain what you are doing; just return the revised answer.\\
- Preserve the general style and structure of the original answer where possible, but it is acceptable to shorten it substantially if many parts are uncertain.\\

Return ONLY the revised answer text, with no extra commentary or headings.\\

Question: \{question\}.
\end{tcolorbox}

\newpage
\section{Full Results}
\label{app:full_results}

\begin{figure*}[!h]
    \centering
    \includegraphics[width=0.85\linewidth]{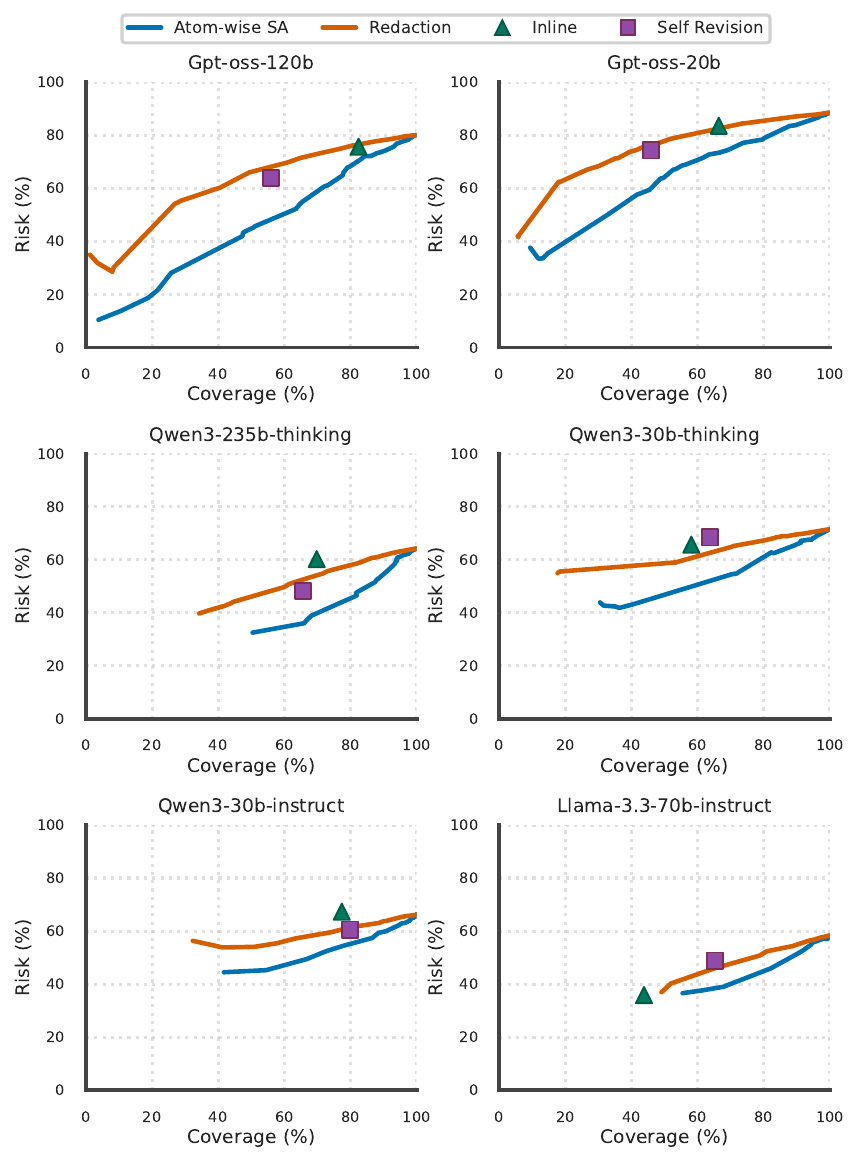}
    \caption{Risk-coverage curves for atom-wise selective abstraction (ours) compared to redaction, Inline, and Self-Revision across six LLMs evaluated on \textbf{36 FactScore entities}. Lower curves indicate improved trade-offs between factual risk and retained coverage.}
\label{fig:FactScore_rc_curves_grid}
\end{figure*}

\begin{figure*}[!h]
    \centering
    \includegraphics[width=0.85\linewidth]{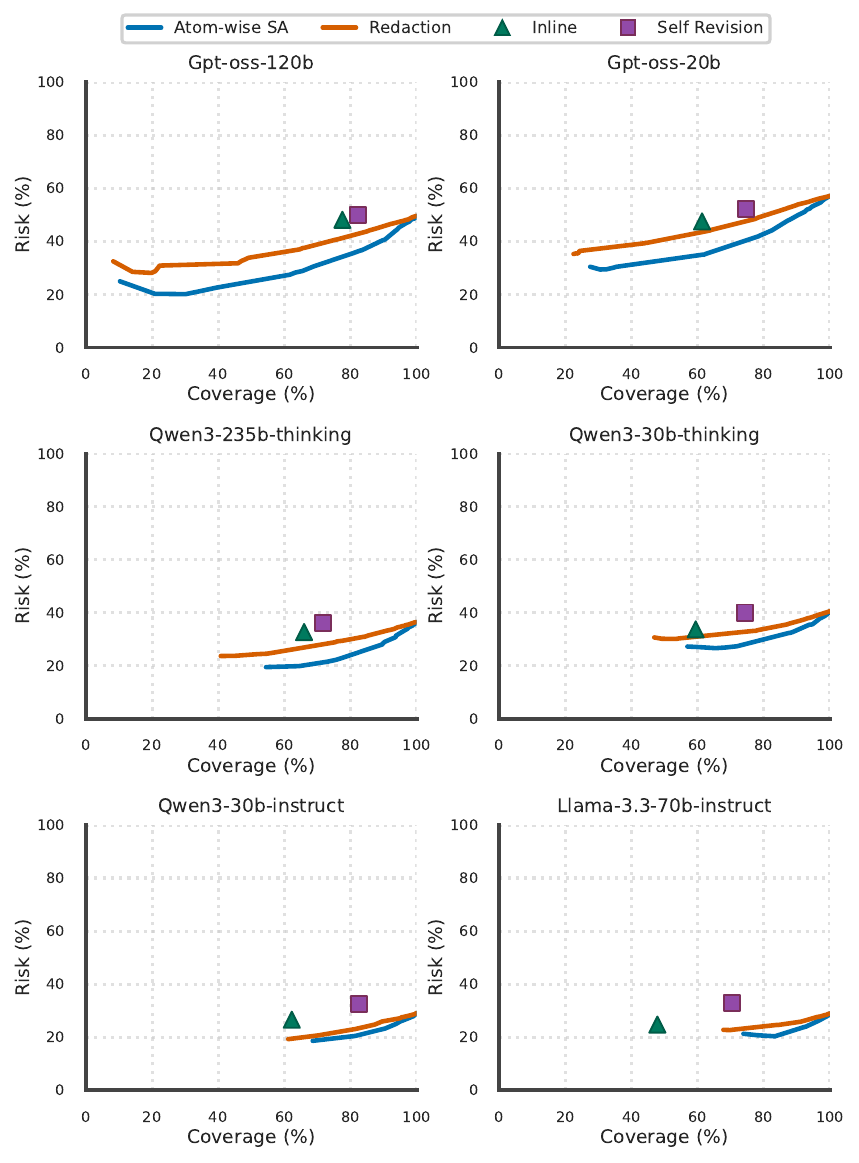}
    \caption{Risk-coverage curves for atom-wise selective abstraction (ours) compared to redaction, Inline, and Self-Revision across six LLMs evaluated on \textbf{76 LongFact prompts}. Lower curves indicate improved trade-offs between factual risk and retained coverage.}
\label{fig:longfact_rc_curves_grid}
\end{figure*}

\newpage
$$ $$

\subsection{Thresholded Baselines}\label{app:threshold_baselines}
This appendix describes the experiment of injecting $\theta$ directly into the prompts on two models: gpt-oss-120b and gpt-oss-20b on FactScore. We chose these models because they exhibit a strong risk-coverage trade-off under both atom-wise SA and selective redaction.

\begin{figure*}[!h]
    \centering
    \includegraphics[width=0.85\linewidth]{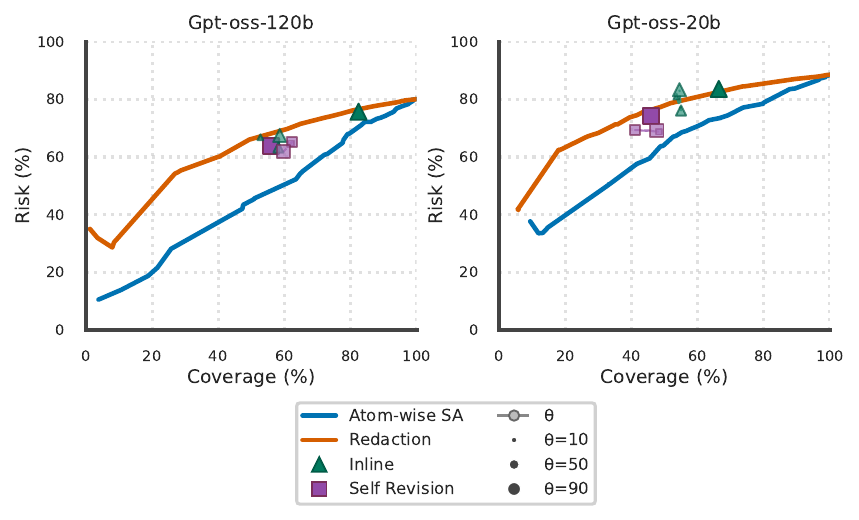}
    \caption{Risk-coverage curves on \textbf{36 FactScore entities} comparing atom-wise selective abstraction (ours) compared to redaction, Inline, and Self-Revision for gpt-oss-120b (left) and gpt-oss-20b (right). We also report \textbf{thresholded  Inline, and Self-Revision} evaluated at three thresholds, $\theta \in \{10,50,90\}$ (shown in transparent markers, size increasing with $\theta$). Introducing $\theta$ into these baselines does not yield a meaningful trade-off for these baselines: the thresholded points cluster closely and show no consistent trend of reducing risk at the cost of coverage.}
\label{fig:threshold_baselines}
\end{figure*}

\newpage
\section{Selective Threshold Algorithm}\label{app:ot_algorithm}

\begin{algorithm}[!h]
  \caption{Selective Threshold Algorithm for Atom-wise Selective Abstraction}
\label{alg:sa_threshold}
  \begin{algorithmic}
    \STATE {\bfseries Input:} Language model $\mathcal{M}$, target risk $\alpha\in(0,1)$, confidence level $1-\delta$, calibration prompts $\mathcal{D}_{\mathrm{cal}}=\{p_i\}_{i=1}^{N}$.
    \STATE {\bfseries Output:} Threshold $\hat{\theta}$ and $\epsilon$ such that $
    P(|R(\hat{\theta})-\alpha)|\le\epsilon)\ge1-\delta
$, where $R(\hat{\theta})=P(\theta_{n+1}>\hat{\theta}|\mathcal{D}_{\mathrm{cal}})$.
    \FOR{$p_i \in \mathcal{D}_{\mathrm{cal}}$}
        \STATE Query $\mathcal{M}$ to obtain response $T_i$
        \STATE Decompose $T_i$ into atomic claims $A(T_i)$
    \ENDFOR
    \STATE $n \gets |A_{cal}|$
    \STATE Compute $\epsilon$ as a function of $\alpha,\delta,n$ (see remark below)
    \FOR{$a_j \in A_{cal} \coloneqq \bigcup_i A(T_i)$}
        \STATE Produce an abstraction sequence $Ab(a_j)$
        \STATE Obtain a correctness annotation for each abstraction in $Ab(a_j)$
        \STATE Compute $\theta_j$, the minimal confidence threshold such that applying atom-wise SA at $\theta_j$ yields a correct abstraction
    \ENDFOR
    \STATE Sort $\{\theta_k\}$ in ascending order
    \STATE $\hat{\theta} \gets \theta_{\lceil (n+1)(1-\alpha)\rceil}$
  \end{algorithmic}
\end{algorithm}

\begin{theorem}
Suppose the claims in the calibration set $A_{cal}=\{a_k\}_{i=1}^{n}$ and a given test claim $a_{n+1}$ are exchangeable. For any target risk $\alpha\in(0,1)$ and $\delta$, define $\theta_{n+1}$, $\hat{\theta}$ and $\epsilon$ as in Algorithm \ref{alg:sa_threshold}, and $R(\hat{\theta})=P(\theta_{n+1}>\hat{\theta}|\mathcal{D}_{\mathrm{cal}})$.\\
Then:$
    P(|R(\hat{\theta})-\alpha)|\le\epsilon)\ge1-\delta
$.
\end{theorem}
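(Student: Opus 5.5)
We use the standard training-conditional analysis of split conformal prediction, applied to the scalar critical thresholds $\theta_1,\dots,\theta_n,\theta_{n+1}$. The claims are exchangeable, and each $\theta_k$ is a fixed (model-determined) function of its claim $a_k$. Hence the scores $\theta_1,\dots,\theta_{n+1}$ are exchangeable; we treat them as i.i.d.\ draws from a distribution $F$, which is the setting in which a calibration-conditional bound can hold. Let $F$ be the CDF of $\theta_{n+1}$. Conditional on $\mathcal{D}_{\mathrm{cal}}$, the test threshold $\theta_{n+1}$ is independent of $\hat\theta$, so
\[
R(\hat\theta)=P(\theta_{n+1}>\hat\theta\mid\mathcal{D}_{\mathrm{cal}})=1-F(\hat\theta).
\]
The task is therefore to control $F$ evaluated at the order statistic $\hat\theta=\theta_{(k)}$, where $k=\lceil (n+1)(1-\alpha)\rceil$.

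\textbf{Step 1 (reduce to uniforms).} Assume first that $F$ is continuous, so there are no ties. Then $U_i=F(\theta_i)$ are i.i.d.\ $\mathrm{Unif}[0,1]$, and $F(\theta_{(k)})=U_{(k)}\sim\mathrm{Beta}(k,n+1-k)$. This distribution-free fact is the standard one behind conformal calibration-conditional coverage (Vovk 2012). Consequently,
\[
1-R(\hat\theta)\sim \mathrm{Beta}(k,\,n+1-k),
\]
whose mean $k/(n+1)$ lies in $[1-\alpha,\,1-\alpha+1/(n+1)]$.

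\textbf{Step 2 (concentration and the choice of $\epsilon$).} We set $\epsilon$ as the function of $(\alpha,\delta,n)$ referenced in the algorithm's remark. The exact choice is the smallest $\epsilon$ such that $P(|U_{(k)}-(1-\alpha)|\le\epsilon)\ge 1-\delta$ under the Beta law, computable from the Beta CDF. For a closed form, we bound the deviation by DKW. By the Dvoretzky--Kiefer--Wolfowitz inequality, $\sup_t|\hat F_n(t)-F(t)|\le\sqrt{\log(2/\delta)/(2n)}$ with probability at least $1-\delta$. At $t=\theta_{(k)}$ we have $\hat F_n(\theta_{(k)})=k/n$, which gives
\[
|F(\hat\theta)-(1-\alpha)|\le \sqrt{\log(2/\delta)/(2n)}+|k/n-(1-\alpha)|.
\]
The last term is at most $(1+\alpha)/n$ up to rounding. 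We therefore take $\epsilon=\sqrt{\log(2/\delta)/(2n)}+2/n$, which yields $P(|R(\hat\theta)-\alpha|\le\epsilon)\ge1-\delta$.

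\textbf{Main obstacle: ties.} Ties are genuinely present, since many atoms have $\theta_k=0$ and verbalized confidences are discrete. With atoms of $F$, the quantity $F(\theta_{(k)})$ can jump above $k/n$, and the two-sided bound fails: the realized risk can be strictly below $\alpha-\epsilon$. We would try to handle this by randomized tie-breaking. Replace each $\theta_i$ by $(\theta_i,\xi_i)$ with $\xi_i$ i.i.d.\ uniform, ordered lexicographically, and interpret ``$\theta_{n+1}>\hat\theta$'' in this augmented order. This restores continuity, so Steps 1--2 apply verbatim. Without randomization, only the one-sided statement $R(\hat\theta)\le\alpha+\epsilon$ holds, since DKW still bounds $F(\theta_{(k)})\ge k/n-\epsilon'$. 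We would state this caveat explicitly.
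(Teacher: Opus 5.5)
Your argument is correct and follows the paper's route. The paper likewise invokes Vovk's result that the calibration-conditional coverage $1-R(\hat\theta)$ is $\mathrm{Beta}(n+1-l,\,l)$ with $l=\lfloor (n+1)\alpha\rfloor$, which is exactly your $\mathrm{Beta}(k,\,n+1-k)$, and reads $\epsilon$ off the Beta CDF; it simply assumes distinct thresholds and states only exchangeability, whereas you rightly require i.i.d.\ scores (which that calibration-conditional Beta law needs) and handle ties by randomized tie-breaking. Your DKW closed form is a valid extra, though the rounding term $|k/n-(1-\alpha)|$ is bounded by $(2-\alpha)/n$, not $(1+\alpha)/n$; your choice $\epsilon=\sqrt{\log(2/\delta)/(2n)}+2/n$ still covers it.
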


\begin{proof}
The following is based on \cite{conformal}: \\
To avoid handling ties, we assume $\{\theta_i \}_{i=1}^{n}$ are distinct, and without loss of generality that the calibration thresholds are sorted: $$0 \le \theta_1 < ... < \theta_n \le 1. $$

To keep indexing inside the array limits:
$
    \hat{\theta}\overset{\Delta}{=} \begin{cases}
        \theta_{\lceil (n+1)(1-\alpha)}, & \alpha \ge \frac{1}{n+1} \\
        1, & otherwise
    \end{cases}.
$

Under the stated exchangeability assumption, for any integer $k\in [1,n]$, we have:
\begin{displaymath}
    P(\theta_{n+1} \le \theta_k) = \frac{k}{n+1}.
\end{displaymath}

By setting $k=\lceil (n+1)(1-\alpha) \rceil$:
\begin{displaymath}
    P(\theta_{n+1} \le \theta_{\lceil (n+1)(1-\alpha) \rceil}) = \frac{\lceil (n+1)(1-\alpha) \rceil}{n+1} \ge 1-\alpha.
\end{displaymath}

Define the (conformal) marginal coverage
\begin{displaymath}
    C \coloneqq P(\theta_{n+1} \le \hat{\theta}|\mathcal{D}_{\mathrm{cal}})
\end{displaymath}

The analytic form of marginal coverage given a fixed calibration set, for a sufficiently large test set, is shown by \cite{vovkfull} to be:
\begin{displaymath}
    C \sim Beta(n+1-l, l), \quad l = \lfloor (n+1)\alpha \rfloor
\end{displaymath}

Then by definition of $R(\hat{\theta})$:
$\quad\quad R(\hat{\theta}) = 1 - C$

From here, the CDF of the $Beta$ distribution paves the way to calculating the size $n$ of the calibration set needed in order to achieve coverage of $1-\alpha \pm \epsilon$ with probability $1-\delta$:
\begin{displaymath}
    P(|R(\hat{\theta})-\alpha)| \le \epsilon ) \ge 1-\delta.
\end{displaymath}
\end{proof}


\newpage
\section{Selective Abstraction on Non-factual Generation}\label{app:non_factual_examples}
As noted in the paper, selective abstraction is not inherently tied to factual generation. In principle, it can be applied to any long-form output where reducing uncertain content improves reliability. A systematic evaluation beyond factual settings is currently less straightforward, since it requires task-appropriate definitions of risk and coverage when ground truth is unavailable or inherently subjective. To illustrate the broader scope, we apply atom-wise selective abstraction to several non-factual generation types, such as forecasting and speculation, moral reasoning, interpersonal advice, and counterfactual reasoning; and report example outputs across confidence thresholds. The examples below were generated with gpt-oss-120b.\\

\begin{tcolorbox}[thetabox,title={Prompt: What will social media look like in 2035? Describe the dominant platforms and how people will use them.\\$\theta = 0$}]
Social media will be dominated by three intertwined ecosystems by 2035.

Immersive Meta‑Spaces are one of those ecosystems. They will blend AR glasses, brain‑computer interfaces and spatial audio into persistent virtual neighborhoods, creating fully 3‑D environments that feel as real as physical spaces. Within Immersive Meta‑Spaces, platforms such as MetaVerseX and NeuraWorld will host these environments. Users will meet, shop, attend concerts and work in the fully 3‑D neighborhoods, while avatars reflect nuanced expressions captured directly from users’ neural signals.

AI‑Curated Content Hubs form the second ecosystem. Services like EchoPulse and SynthStream will each use generative‑AI agents to compose personalized video, audio and written feeds in real time. Users will receive a continuously evolving “storyline” that adapts to their mood, attention span and the context of their day.

Micro‑Social Networks complete the trio. Apps such as PulseChat and SnapBrief provide lightweight, purpose‑driven experiences—instant voice‑clips for close circles and 5‑second visual updates, respectively. These networks will leverage decentralized identities and end‑to‑end encryption, giving users granular control over data sharing.

Overall, people will shift from passive scrolling to active participation in both immersive and AI‑enhanced experiences, while retaining tight‑knit, low‑friction channels for intimate communication.
\end{tcolorbox}

\begin{tcolorbox}[thetabox,title={$\theta = 50$}]
Social media in the 21st century will be shaped by a variety of ecosystems, expanding far beyond traditional text‑based feeds. Users will increasingly gather in virtual environments built by immersive platforms, whether to meet friends, attend live events, or collaborate on work tasks. Those same technology platforms will also host digital spaces where users can make purchases, and EchoPulse will contribute by using technology to create digital media for these environments.

Online services will deliver a continuously evolving “storyline” to users, a narrative that adapts in response to external factors. Within this broader landscape, micro‑social networks will adopt security measures to protect participants, while giving users direct control over what they share. At the same time, people will continue to maintain personal communication channels for private interaction.
\end{tcolorbox}

\begin{tcolorbox}[thetabox,title={$\theta = 80$}]
Users will meet online and will participate in live events. They will interact with software systems created by technology platforms, receive content from online services, and influence content in social networks. People will keep means for communication.
\end{tcolorbox}

\begin{tcolorbox}[thetabox,title={$\theta = 95$}]
Users will meet online, where they will interact with software systems created by technology.
\end{tcolorbox}
\vspace{15mm}

\newpage
\begin{tcolorbox}[thetabox,title={Is it ever justified to judge people primarily by their intentions rather than their actions?\\$\theta = 0$}]
Judging people mainly by their intentions can be justified when assessing moral character, when assessing motive, and when assessing culpability. It is especially justified in the context of forgiveness, in personal relationships, and in legal defenses that consider mens rea. Actions directly affect others, yet relying solely on intent can overlook real harm, can encourage irresponsibility, and can undermine accountability. Therefore, most practical and ethical decisions require a balanced view that weighs both intention and outcome.
\end{tcolorbox}

\begin{tcolorbox}[thetabox,title={$\theta = 75$}]
Judging people mainly by their intentions can be justified when assessing moral character, motive, and culpability. It is especially justified in the context of forgiveness and in legal defenses that consider mens rea.
Because actions directly affect others, relying solely on intent can overlook real harm, encourage irresponsibility, and undermine accountability.
Therefore, most practical and ethical decisions require a balanced view that weighs both intention and outcome.
\end{tcolorbox}

\begin{tcolorbox}[thetabox,title={$\theta = 90$}]
Judging people mainly by their intentions can be justified when assessing motive, in evaluative contexts, and in some moral situations. However, actions directly affect others, and relying solely on intent can overlook real harm. Consequently, most practical and ethical decisions require a consideration that weighs relevant factors.
\end{tcolorbox}
\vspace{15mm}

\begin{tcolorbox}[thetabox,title={Suppose gravity on Earth suddenly became half as strong. Describe how daily life, architecture, and sports would change over the next 10 years.\\$\theta = 0$}]
Earth’s gravity drops to 50\% of its present value. Everyday activities feel lighter, so stair climbing becomes easier and walking is slightly faster with a more pronounced bounce. People could lift heavier objects with less effort, and the world moves more freely under half‑gravity. 
In architecture, buildings would be designed with slimmer columns, longer spans, and taller, more open structures. Existing skyscrapers would need retrofitting to prevent swaying in the reduced‑weight environment. 
Transportation would shift toward lighter vehicles. Cars could be smaller and more fuel‑efficient, while bicycles and personal hover devices would become popular for commuting. 
Sports would evolve dramatically. Basketball hoops might be raised, volleyball spikes would carry farther, and high‑jump events would dominate. Contact sports would adjust rules to counteract increased airtime and reduced impact forces. 
Culturally, aesthetic trends would lean toward verticality and fluid motion over the coming decade.
\end{tcolorbox}

\begin{tcolorbox}[thetabox,title={$\theta = 75$}]
Earth’s gravity drops to 50\% of its present value, so everyday activities feel lighter. People could lift heavier objects with less effort, stair climbing would become easier, and walking would have a more pronounced bounce. Buildings would be designed with columns and with larger spans to accommodate the altered load‑bearing requirements. Transportation would shift toward lighter vehicles: cars could be smaller and more fuel‑efficient, and bicycles would become popular for everyday travel. Sports would evolve noticeably; basketball hoops might become adjustable, volleyball spikes would carry farther, high‑jump events would be prominent, and contact sports would adjust their rules to counteract the new physical effects. Overall, the world would move more freely under half‑gravity.
\end{tcolorbox}

\begin{tcolorbox}[thetabox,title={$\theta = 90$}]
With Earth's gravity dropping to 50\% of its present value, everyday activities feel lighter. People could lift heavier objects with less effort, and stair climbing would become easier. Walking would have a bounce. Buildings would be designed with structural elements. Transportation would shift toward transport modes; cars could be customizable and more efficient, while bicycles would be used for movement. Sports could evolve—high‑jump events might be notable, and contact sports might modify regulations.
\end{tcolorbox}
\vspace{10mm}

\begin{tcolorbox}[thetabox,title={My friend hasn’t replied to my messages for three days after we had a small argument. What might be going on?\\$\theta = 0$}]
After a disagreement, people often need time to cool down, to sort through their feelings, and to handle other commitments. Your friend might be upset, unsure how to respond, feeling overwhelmed, or simply busy. Giving your friend a little space can help open the line of communication, and reaching out gently with a check‑in message can also encourage dialogue.
\end{tcolorbox}

\begin{tcolorbox}[thetabox,title={$\theta = 75$}]
After a disagreement, people often need time to cool down, to sort through their feelings, and to handle other commitments. Your friend may be experiencing an emotion. Giving your friend a little space can help open the line of communication, and reaching out gently with a check‑in message can also help open the line of communication.
\end{tcolorbox}

\begin{tcolorbox}[thetabox,title={$\theta = 90$}]
After a conflict, people sometimes need time to regain their composure. Your friend might be experiencing something or be in a particular state, and allowing them space could help them process what happened.
\end{tcolorbox}
\vspace{10mm}

\end{document}